\documentclass{article}
\usepackage{arxiv}

\usepackage[utf8]{inputenc} 
\usepackage[T1]{fontenc}    
\usepackage{hyperref}       
\usepackage{url}            
\usepackage{booktabs}       
\usepackage{amsfonts}       
\usepackage{nicefrac}       
\usepackage{microtype}      
\usepackage{lipsum}		
\usepackage{graphicx}
\usepackage{doi}
\usepackage{amsthm}
\usepackage{amsmath}
\usepackage{mathrsfs}%
\usepackage{comment}%
\usepackage[title]{appendix}%
\usepackage{xcolor}%
\usepackage{textcomp}%
\usepackage{manyfoot}%
\usepackage{booktabs}%
\usepackage{algorithm}%
\usepackage{algorithmicx}%
\usepackage{algpseudocode}%
\usepackage{listings}%

\theoremstyle{plain}
\newtheorem{theorem}{Theorem}
\newtheorem{lemma}[theorem]{Lemma}

\newtheorem{rem}{Remark}

\theoremstyle{definition}
\newtheorem{definition}{Definition}

\theoremstyle{remark}

\theoremstyle{prop}
\newtheorem{prop}{Proposition}

\title{Information-Geometric Optimization on Spheres}

\author{Vladimir Ja\' cimovi\' c \\
	Faculty of Natural Sciences and Mathematics\\
	University of Montenegro\\
	Cetinjski put bb., 81000 Podgorica\\
	Montenegro\\
	\texttt{vladimirj@ucg.ac.me} \\
}

\renewcommand{\shorttitle}{IGO on Spheres}

\hypersetup{
pdftitle={Information-Geometric Optimization on Spheres},
pdfsubject={-},
pdfauthor={Vladimir Ja\' cimovi\' c},
}

\begin{document}
\maketitle

\begin{abstract}
We consider the black-box optimization problem on a sphere. Two information-geometric optimization flows (IGO flows) are designed with rigorous calculation of natural search gradients based on hyperbolic (information) geometry of Poincar\' e and Bergman balls. We demonstrate that ensembles of generalized Kuramoto oscillators on spheres compute natural search gradients and realize IGO algorithms on both manifolds. The relationship between natural gradient policies in Bergman balls and quantum decision making is pointed out.
\end{abstract}

\keywords{Poincar\' e balls \and Bergman balls \and Poisson-Szeg\" o kernels \and Kuramoto models on spheres \and quantum decision-making}

\section{Introduction}

The framework of Natural Evolution Strategies (NES) \cite{Wierstra} has been established in the first decade of this century as the result of research efforts to extract principled approaches from various stochastic search heuristics. The blueprint of NES is information-geometric point of view on search strategies \cite{Beyer,OAAH}. Bringing information-geometric concepts into black-box optimization was the key step which enabled transition from evolution strategies \cite{BS} towards more principled NES framework. 

Further conceptual upgrade has been achieved with the introduction of information-geometric optimization (IGO) \cite{OAAH}. IGO presents a mathematically rigorous approach to stochastic search by constructing a continuous-time flow over a smooth parametric family of probability distributions on a search space. Trajectories of this flow follow the natural gradient of a function on the parameter space w. r. to the chosen family. This method can also be interpreted as an infinitesimal maximum likelihood update under the model assumption defined by the family of distributions. This interpretation is particularly transparent for exponential families on Euclidean spaces. Yet another interpretation of IGO flows is maximization of the expectation of the objective function under constrained loss of diversity \cite{Beyer,OAAH}. IGO algorithms are derived from the IGO flow through time discretization and sampling from the current probability distribution at each time step.

IGO emphasizes invariance as the key property for the design of efficient and tractable search strategies. In particular, efficiency of CMA-ES (Covariance Matrix Adaptation - Evolution Strategies), the most successful black-box optimization algorithm on continuous search space, is due to the invariance of the Gaussian family w. r. to linear-affine transformations of the Euclidean space. From this point of view, IGO algorithms can be seen as manifestations of the general rule of geometric probability \cite{Santalo} stating that natural probability models are those which are invariant under actions of a transformation group. In addition, IGO flows are invariant w. r. to reparametrization of probability distributions, as well as w. r. to strictly increasing transformations of the objective function. 

IGO algorithms are derived from the IGO flow through time discretization and sampling from the current probability distribution at each time step.
Theoretically, the NES/IGO framework enables the design of search strategies based on any smooth parametric family of probability distributions. In practice, the Gaussian family (including sub-families with diagonal covariance matrices) is typically used when optimizing over continuous Euclidean spaces. As mentioned above, this choice gives rise to versions of the CMA-ES algorithm \cite{HO,HA,ANOK}. The inherent drawback of this choice is slow exploration of larger regions in the search space due to light tails of Gaussian distributions. This observation motivated studies on alternative NES algorithms with heavy-tailed probability distributions (such as multivariate Cauchy), thus enabling faster evolution towards distant regions in the search space \cite{SGS}. However, this choice is associated with a number of difficulties related to computation of Fisher information and parameter updates. 

In the last decade, there was a convergent evolution between NES and reinforcement learning (RL). This convergence is not surprising in the view of the fact that natural policy gradients introduced in RL by Kakade \cite{Kakade} stem from essentially the same idea which underlies NES. Therefore, relevance of NES and IGO principles extends beyond the narrow black-box optimization context. These principles are incorporated in various RL algorithms or studied as an alternative to RL \cite{SHCSS,KO}. NES are particularly applicable in episodic RL \cite{HN}.

Although IGO presents a universal framework for the black-box optimization over arbitrary search spaces, almost all successful realizations are on finite or Euclidean search spaces. Stochastic search methods over curved manifolds are sparse and underexplored. At the same time, RL problems with curved action spaces, such as \cite{JA} and hyperbolic balls \cite{CCBH} attract an increasing attention. Black-box optimization on spheres, or rotation groups \cite{CJK} became particularly relevant with the advent of RL with deep feature representations, as in many RL models features are represented by unit vectors, that is - by points on the sphere \cite{CJK}. Obviously, such problems naturally arise when dealing with directional (spherical) data, space navigation or robotic motions. However, deep spherical features are ubiquitous in DL and appear in transformers \cite{Meng}, convolutional neural networks \cite{CCG}, neural evolution pathways \cite{SNHB}, etc.  

In the present paper we consider the following optimization problem on a sphere
\begin{equation}
\label{fitness}
\mbox{minimize  } f(y) \; \mbox{ w.r. to} \quad y \in {\mathbb S}^{d-1}.
\end{equation}
where $f(\cdot)$ is a black-box objective function.

We say that \eqref{fitness} is a {\it directional black-box optimization} problem. We will rigorously derive NES for this problem based on two families of probability distributions on spheres. Along the exposition we will unveil that invariance properties of our families underlie efficient and elegant algorithms, strongly related with conformal geometry and Lorentz groups. Thus, our findings are precisely in line with the general principle of IGO in a rich geometric context. 

A brief explanation of natural search gradients over arbitrary search spaces is provided in the next section. In Section 3 we explain the Cauchy family of probability distributions on spheres in real vector spaces and demonstrate that endowing this family with the Fisher information metric yields a statistical manifold isomorphic (up to a constant multiplier) to the Poincar\' e ball. In Section 4 we compute natural search gradients for this family and construct the corresponding IGO flow. We further demonstrate that this flow is generated by the real Kuramoto model on a sphere. We further instantiate this IGO flow into two IGO algorithms which exploit Kuramoto dynamics for estimations of natural search gradients. In Section 5 we introduce another family of probability distributions on spheres in complex vector spaces and demonstrate that the Fisher information metric turns this family into a statistical manifold isomorphic to the Bergman ball. In Section 6 we compute natural search gradients for this family and demonstrate their relationship with automorphisms of the Bergman ball. We further present the complex projective Kuramoto model of generalized oscillators on the sphere which generates this flow. Finally, we propose two IGO algorithms derived from IGO flows in the Bergman balls. Section 7 contains concluding remarks and an outlook for further developments and applications in evolutionary optimization, directional (including quantum) decision-making and RL.

\section{Search gradients in black-box optimization}

Given an optimization problem with the fitness function $f(x)$ over a search space $X$, we can design a search strategy (or search policy) based on a family of probability distribution $\pi_\theta$ parametrized by $\theta \in \Theta$. We will assume that distributions $\pi_\theta$ are absolutely continuous and denote the corresponding densities by $p(x;\theta)$. We can perform the search by sampling from $\pi_\theta$ and updating parameter $\theta$ based on evaluations of $f(\cdot)$.
 
Such search strategy can be written as an optimization problem over the space $\Theta$: 
\begin{equation}
\label{exp_fitness1}
J(\theta) = {\mathbb E}_{x \sim \pi_\theta} [f(x)] = \int f(x) p(x \, | \, \theta) dx \to \max_{\theta}.
\end{equation}  

At the first glance, calculation of gradients of the objective function in \eqref{exp_fitness1} may seem unfeasible. However, the {\it log-likelihood trick} yields
$$
\nabla_\theta J(\theta) = \nabla_\theta \int f(x) p(x \, | \, \theta) dx = \int f(x) \nabla_\theta p(x \, | \, \theta) dx = 
$$
$$
\int f(x) \nabla_\theta p(x \, | \, \theta) \frac{p(x\, | \, \theta)}{p(x \, | \, \theta)} dx =
 \int [f(x) \nabla_\theta \log p(x \, | \, \theta)] p(x\, | \, \theta) dx = 
$$
$$ 
 {\mathbb E}_\theta [f(x) \nabla_\theta \log p(x \, | \, \theta)].
$$
The last expression unveils that search gradients can be estimated from samples $x_1,\dots,x_M$ in the following way
\begin{equation}
\label{grad_est}
\nabla_\theta J(\theta) \approx \frac{1}{M} \sum_{i=1}^M f(x_i) \nabla_\theta \log p(x_i \, | \, \theta).
\end{equation}

Then the gradient descent algorithm may be written as follows:
\begin{equation}
\label{gradient_ascent}
\theta^{(t+\delta t)} = \theta^t + \delta t \nabla J(\theta^t).
\end{equation}

In the limit of infinitesimally small step $\delta t \to 0$, this algorithm yields the following gradient flow ODE
\begin{equation}
\label{gradientODE}
\frac{d \theta}{dt} = \nabla J(\theta) \bigg|_{\theta = \theta(t)}.
\end{equation}

\subsection{Natural search gradients}

Gradient decent algorithm \eqref{gradient_ascent} assumes the Euclidean metric on the parameter space $\Theta$. In most cases, such an assumption is unjustified and may result in decreased reliability, numerical instabilities and premature convergence of algorithms.

Information geometry introduces the Fisher information metric on the space $\Theta$ thus turning it into a Riemannian manifold. The Fisher information matrix is defined as covariance of the score 
$$
F(\theta) = \mathbb{E}_{\pi_\theta} \left[ \nabla_\theta \log p(x;\theta) (\nabla_\theta \log p(x;\theta))^T \right].
$$
Under regularity assumptions on the family $\pi_\theta$, the above expression for $F(\theta)$ is the positive definite matrix, thus inducing the metric on the manifold $\Theta$. 

Fisher information matrix also appears as the Hessian of the Kullback-Leibler divergence between densities $\pi_{\delta \theta}$ and $\pi_\theta$. For rigorous exposition of information geometry we refer to the pioneering book \cite{AN} or more recent comprehensive and advanced book \cite{AJLS}.

Therefore, the gradient of the function $J$ w. r. to the Fisher information metric (so-called natural gradient \cite{AN,AJLS}) reads
\begin{equation}
\label{nat_grad_est}
\tilde \nabla_\theta J(\theta) \equiv F(\theta)^{-1} \nabla_\theta J(\theta) \approx \frac{1}{M} \sum_{i=1}^M f(x_i) \nabla_\theta \log p(x_i \, | \, \theta).
\end{equation}

Substituting \eqref{nat_grad_est} yields a modification of the "vanilla" gradient flow \eqref{gradientODE}
\begin{equation}
\label{nat_gradientODE}
\frac{d \theta}{dt} = \tilde \nabla J(\theta) \bigg|_{\theta = \theta(t)} = F(\theta)^{-1} \nabla J(\theta) \bigg|_{\theta = \theta(t)}.
\end{equation}

The main contribution of the present paper consists in deriving IGO flows for optimization problems on spheres based on two particular families of probability distributions, as well as proposing computational methods.

\section{Conformally natural family of probability distributions on spheres in real vector spaces}

In this Section we introduce the statistical model for directional NES in real vector spaces. We start with the Poincar\' e ball model of hyperbolic geometry. 

Throughout this Section we denote by $x^T y = x_1 y_1 + \cdots + x_d y_d$ the standard scalar product in $\mathbb R^d$ and by $\| x \| = \sqrt{x^T x}$.

\subsection{Poincar\' e balls and their isometry groups}  

\begin{definition}
\label{Poin_ball_def}
Consider the set $\{x \in \mathbb{R}^d \; : \; \| x \|<1\}$ equipped with the metric $$g_x(u,v) = \frac{u^T v}{(1-\| x \|^2)}, u, v\in \mathbb{R}^d.$$ This manifold is named the $d$-dimensional Poincar\' e ball. We will denote it by $\mathbb{B}^d$ throughout this paper.
\end{definition}


Let $x,y \in \mathbb{B}^d$. The Poincar\'e distance on $\mathbb{B}^d$ is given by
\begin{equation}\label{pome}
d_h(x,y)=\frac{1}{2}\log \frac{1+R}{1-R},
\end{equation} 
where
 \begin{equation}
 \label{rho}
 R=\frac{\| x-y \|}{\sqrt{\rho(x,y)}} \mbox{ and } \rho(x,a)= \| x-a \|^2 + (1- \| a \|^2)(1- \| x \|^2).
 \end{equation}
Consider transformations of the unit ball of the following form
\begin{equation} 
\label{Mobius_ball}
h_a(x)= A \frac{a \| x-a \|^2+(1- \| a \|^2)(a-x)}{\rho(x,a)},
\end{equation}
where $a \in \mathbb{B}^d$ and $A$ is an orthogonal transformation of the Euclidean space $\mathbb{R}^d$.

Transformations \eqref{Mobius_ball} map the unit ball in $\mathbb{R}^d$ onto itself. They are named M\" obius (or conformal) transformations. 

It is easy to verify that $h_c^{-1}(x)=h_c(x)$ for each $c\in \mathbb{B}^d$. These transformations constitute the isometry group for the manifold ${\mathbb B}^d$. We denote it by $Isom(\mathbb{B}^d)$. This group is isomorphic to the Lorentz group $SO^+(d,1)$.




Jacobian of the mapping $y=h_a(x)$ is given by  
\begin{equation}
\label{jaco}
{\cal J}(y,x)=\frac{1- \| a \|^2}{\rho(a,x)^n}=\frac{(1- \| y \|^2)^n}{(1- \| x \|^2)^n}.
\end{equation}

\subsection{Cauchy distributions on spheres in real vector spaces}

Our first model is the following family of probability distributions on spheres introduced in \cite{KMcC}:
\begin{equation}
\label{conf_nat_dens}
p_{sC}(x;a) = \frac{\Gamma(d/2)}{2 \pi^{d/2}} \left( \frac{1 - \| a \|^2}{\| x-a \|^2} \right)^{d-1}, \quad \| x \| = 1, \, \| a \|<1.
\end{equation}
This family of densities is parametrized by a point $a$ in the unit ball. Following \cite{KMcC} we will refer to the distributions defined by \eqref{conf_nat_dens} as {\it spherical Cauchy} distributions. \footnote{Notice that densities of the form \eqref{conf_nat_dens} arise in the geometric theory of PDE's as hyperbolic Poisson kernels. They are eigenfunctions of the hyperbolic Laplace-Beltrami operator for the minimal eigenvalue.} Accordingly, we will denote this family by $sC(a)$.

The family $sC(a)$ includes the uniform on ${\mathbb S}^{d-1}$ for $a=0$. On the other extreme, delta distributions appear in the limit $\|a\| \to 1$.

\subsubsection{Properties of the Cauchy family on spheres}

We briefly outline some properties of the family $sC(a)$ that are relevant for our analysis. Most of these properties have been proven in \cite{JK}. 

\begin{enumerate}

\item [i)] {\bf Conformal invariance.} \,
 The family $sC(a)$ is invariant for actions of the group of conformal transformations \eqref{Mobius_ball}. Moreover, this group acts transitively on the family $sC(a)$. Namely


\begin{prop}
If $X \sim sC(y)$ then $g_a(X) \sim sC(g_a(y))$.
\end{prop}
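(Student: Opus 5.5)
The plan is a change-of-variables argument on the sphere. I read $g_a$ as the M\"obius map $h_a$ of \eqref{Mobius_ball}, together with its orthogonal factor $A$. Such a map sends the closed ball onto itself, restricts to a diffeomorphism of $\mathbb{S}^{d-1}$, and satisfies $h_a^{-1}=h_a$ when $A=I$. If $X\sim sC(y)$ and $Z=g_a(X)$, then for $z\in\mathbb{S}^{d-1}$ the density of $Z$ with respect to surface measure is
\[
q(z)=p_{sC}\bigl(g_a^{-1}(z);y\bigr)\,\bigl|J_{\mathbb{S}}(g_a^{-1})(z)\bigr|,
\]
where $J_{\mathbb{S}}$ is the Jacobian of the map restricted to the sphere. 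The goal is to show $q(z)=p_{sC}(z;g_a(y))$. Because the orthogonal factor $A$ preserves surface measure and sends $sC(b)$ to $sC(Ab)$, it suffices to treat $A=I$, so $g_a^{-1}=g_a$.

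\textbf{Step 1: the conformal factor on the boundary.} Since $g_a$ is conformal, its differential at $x\in\mathbb{S}^{d-1}$ is a scalar multiple $\lambda_a(x)$ of an orthogonal map. The surface Jacobian is therefore $\lambda_a(x)^{d-1}$, not the $d$-dimensional Jacobian in \eqref{jaco}. Setting $\|x\|=1$ in \eqref{rho} gives $\rho(x,a)=\|x-a\|^2$, and the standard computation gives
\[
\lambda_a(x)=\frac{1-\|a\|^2}{\|x-a\|^2}.
\]
To check this, I would either differentiate \eqref{Mobius_ball} directly, or write $g_a$ as a composition of an inversion in a sphere orthogonal to $\mathbb{S}^{d-1}$ with a reflection, and use that the inversion $x\mapsto c+r^2(x-c)/\|x-c\|^2$ has conformal factor $r^2/\|x-c\|^2$.

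\textbf{Step 2: the key identity.} I would establish
\[
1-\|g_a(w)\|^2=\frac{(1-\|a\|^2)(1-\|w\|^2)}{\rho(w,a)}
\quad\text{and}\quad
\|g_a(x)-g_a(w)\|^2=\frac{(1-\|a\|^2)^2\,\|x-w\|^2}{\rho(x,a)\,\rho(w,a)}.
\]
Both follow from the inversion description: the second from $\|I(u)-I(v)\|=r^2\|u-v\|/(\|u-c\|\,\|v-c\|)$, and the first from its formal analogue, which is also the fact behind the isometry property and \eqref{jaco}. Applying them with $w=y$ and $x\in\mathbb{S}^{d-1}$, so that $\rho(x,a)=\|x-a\|^2$, gives
\[
\frac{1-\|g_a(y)\|^2}{\|g_a(x)-g_a(y)\|^2}=\frac{1-\|y\|^2}{\|x-y\|^2}\cdot\frac{\|x-a\|^2}{1-\|a\|^2}=\frac{1-\|y\|^2}{\|x-y\|^2}\,\lambda_a(x)^{-1}.
\]

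\textbf{Step 3: assembly.} Put $x=g_a(z)$ and raise the identity of Step 2 to the power $d-1$. This gives
\[
p_{sC}(x;y)=p_{sC}\bigl(z;g_a(y)\bigr)\,\lambda_a(x)^{d-1}.
\]
By the chain rule, $\lambda_a(x)^{d-1}=|J_{\mathbb{S}}(g_a^{-1})(z)|^{-1}$, so $q(z)=p_{sC}(z;g_a(y))$, which is the claim.

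The main obstacle is the two identities in Step 2, which are algebraically messy if done directly from \eqref{Mobius_ball}. I would avoid that by decomposing $g_a$ into inversions and reflections, for which both the distance-distortion formula and the conformal factor are elementary, and then composing.
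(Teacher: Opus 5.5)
Your proof is correct. There is no proof in the paper to compare it with: the paper attributes this proposition to \cite{JK} and \cite{KMcC} without proving it. In Appendix A1 it then uses the consequence $p(x;h^{-1}(\tilde a))=p(h(x);\tilde a)\,\frac{d\sigma(h(x))}{d\sigma(x)}$ without justification. Your change-of-variables argument supplies exactly that identity, and it does so self-containedly.

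Its content is twofold. On $\mathbb{S}^{d-1}$ the factor $\bigl((1-\|a\|^2)/\|x-a\|^2\bigr)^{d-1}$ is the surface Jacobian of $g_a$. Your Step 2 identities show that the kernel $(1-\|y\|^2)/\|x-y\|^2$ is multiplied by exactly $\lambda_a(x)^{-1}$ under $g_a$. The exponent $d-1$ in \eqref{conf_nat_dens} is what makes these two facts match. You are right to use the boundary Jacobian $\lambda_a^{d-1}$ rather than the volume Jacobian \eqref{jaco}; with the latter the exponents would not balance. As a by-product, taking $y=0$ and $A=I$ (so $g_a(0)=a$) gives the sampling property iii), which the paper also states without proof.

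Two small points of precision:
\begin{itemize}
\item For $A=I$ and $a\neq 0$, the map \eqref{Mobius_ball} is an inversion composed with an orthogonal map, not in general with a single reflection. Set $a^*=a/\|a\|^2$, $r^2=\|a^*\|^2-1$, and let $\sigma$ be the inversion in the sphere of center $a^*$ and radius $r$. A direct computation gives $g_a=\sigma\circ T$ with $T=2aa^T/\|a\|^2-I$, which fixes $a$ and is a single reflection only when $d=2$. Since $T$ is orthogonal and fixes $a^*$, your Step 2 identities follow from:
\begin{itemize}
\item $\|\sigma(u)-\sigma(v)\|=r^2\|u-v\|/(\|u-a^*\|\,\|v-a^*\|)$;
\item $1-\|\sigma(u)\|^2=r^2(1-\|u\|^2)/\|u-a^*\|^2$, which is the ``formal analogue'' you left implicit and uses $r^2=\|a^*\|^2-1$;
\item $\|a\|^2\|u-a^*\|^2=\rho(u,a)$ and $r^2\|a\|^2=1-\|a\|^2$.
\end{itemize}
\item The case $a=0$ must be set aside, since the inversion degenerates there. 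In that case $g_0=-A$ is orthogonal and the claim is immediate.
\end{itemize}
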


\item [ii)] {\bf Centroid.} \, Let $X \sim sC(a)$. Then the first moment (centroid) of $X$ is a point in the unit ball given by
$$
\langle X \rangle = \int_{{\mathbb S}^d} x \, p_{sC}(x;C) \, d \sigma(x) = \mu_{d-1}(\| a \|) a, \mbox{  where}
$$
$$\mu_{d-1}(y) = \frac{1+y^2}{2 y^2} \left[ 1 - \frac{1-y^2}{1+y^2} F \left\{\frac{1}{2};\frac{d-1}{2};\frac{d+1}{2};-\frac{4 y^2}{(1-y^2)^2} \right\} \right]
$$
and $F\left\{ \cdot;\cdot;\cdot;\cdot\right\}$ denotes the hypergeometric series.

We refer to \cite{KMcC} for details and formulae for higher order moments.

\item [iii)] {\bf Sampling.} \, In order to generate random variate from $sC(a)$ it suffices to generate a uniformly distributed point $y$ on the sphere ${\mathbb S}^d$ and map it using \eqref{Mobius_ball}. Indeed, if $y \sim sC(0)$ then $g_a(y) \sim sC(a)$.

\item[iv)] {\bf Fisher information.} 

\begin{prop}
\label{Fisher_info_Cauchy}
Fisher information matrix for the family $sC(a)$ is given by
\begin{equation}
\label{Fisher_conf_nat}
F_{sC}(a) = - {\mathbb E} \left \{ \frac{\partial \log p_{sC}(x;a)}{\partial a} \frac{\partial \log p_{sC}(x;a)}{\partial a^T} \right \} = \frac{4}{(1 - \| a \|^2)^2} \frac{(d-1)^2}{d} I.
\end{equation}
\end{prop}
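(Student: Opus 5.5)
The plan is to compute the Fisher matrix at the single parameter value $a=0$ by a direct calculation, and then transport it to an arbitrary $a$ using the conformal invariance of the family $sC(a)$ (property i)). The minus sign in front of the expectation in \eqref{Fisher_conf_nat} I read as a typo: the Fisher matrix is the covariance of the score, equivalently minus the expected Hessian. I will work with $\mathbb{E}[\nabla_a \log p \, (\nabla_a \log p)^T]$.

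\textbf{Step 1 (the origin).} Write
$$\log p_{sC}(x;a)=\mathrm{const}+(d-1)\bigl[\log(1-\|a\|^2)-\log\|x-a\|^2\bigr].$$
Differentiating in $a$ gives the score
$$\nabla_a \log p_{sC}(x;a)=(d-1)\Bigl[-\frac{2a}{1-\|a\|^2}+\frac{2(x-a)}{\|x-a\|^2}\Bigr].$$
At $a=0$, using $\|x\|=1$, this reduces to $2(d-1)x$. Since $sC(0)$ is the uniform distribution on the sphere, symmetry and $\mathrm{tr}\,\mathbb{E}[xx^T]=1$ give $\mathbb{E}[xx^T]=\frac1d I$. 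Hence
$$F_{sC}(0)=\frac{4(d-1)^2}{d}\,I.$$
(If the direct route is preferred, one can also try plugging the full score into the expectation with the Poisson-kernel moments, but the invariance route avoids those integrals.)

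\textbf{Step 2 (equivariance of the Fisher metric).} By the Proposition in i), for a Möbius map $g$ we have $X\sim sC(b)\Rightarrow g(X)\sim sC(g(b))$. Thus $b\mapsto g(b)$ is a reparametrization of the family induced by a bijection of the sample space. The sample-space Jacobian in the change-of-variables formula does not depend on the parameter, so it drops out of the score. It follows that the scores satisfy $\nabla_b \log p(g^{-1}(y);b)=Dg(b)^T\,\nabla_c\log p(y;c)\big|_{c=g(b)}$, and taking expectations gives the pullback rule
$$F_{sC}(b)=Dg(b)^T\,F_{sC}(g(b))\,Dg(b).$$

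\textbf{Step 3 (transport).} Take $g=h_a$ from \eqref{Mobius_ball} with $A=I$, which satisfies $h_a(a)=0$. Then $F_{sC}(a)=Dh_a(a)^T F_{sC}(0)\,Dh_a(a)$, and it remains to know $Dh_a(a)$. Since $h_a$ is an isometry of the hyperbolic metric $\frac{\|dx\|^2}{(1-\|x\|^2)^2}$ (up to constant normalization) and is conformal, $Dh_a(a)$ equals $\frac{1}{1-\|a\|^2}$ times an orthogonal matrix. Because $F_{sC}(0)$ is a multiple of $I$, the orthogonal factor cancels, yielding
$$F_{sC}(a)=\frac{4}{(1-\|a\|^2)^2}\frac{(d-1)^2}{d}I.$$

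\textbf{Main obstacle.} The delicate point is the explicit derivative $Dh_a(a)$. I would verify it directly from \eqref{Mobius_ball}: set $x=a+\epsilon v$, so $\rho(x,a)=\epsilon^2\|v\|^2+(1-\|a\|^2)(1-\|x\|^2)$, and expand to first order in $\epsilon$. The numerator is $-(1-\|a\|^2)\epsilon v+O(\epsilon^2)$ and the denominator is $(1-\|a\|^2)^2+O(\epsilon)$, so $Dh_a(a)v=-v/(1-\|a\|^2)$. This confirms the claimed scaling and, incidentally, shows the paper's stated metric should carry the squared conformal factor $(1-\|x\|^2)^{-2}$.
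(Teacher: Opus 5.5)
Your proof is correct and follows the same route as the paper. In both, the Möbius equivariance of $sC(a)$ makes the Fisher metric invariant, and the constant is fixed by computing $F_{sC}(0)=\frac{4(d-1)^2}{d}I$ from the score $2(d-1)x$ at the origin.

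The only difference is the transport step. You use the explicit pullback $F_{sC}(a)=Dh_a(a)^T F_{sC}(0)\,Dh_a(a)$ with $Dh_a(a)=-I/(1-\|a\|^2)$. The paper instead appeals to uniqueness, up to a constant, of invariant metrics on the ball, which tacitly requires the isotropy group $O(d)$ to act irreducibly and not just transitivity. Your version is more self-contained. Your remarks that the minus sign in the statement is spurious and that the Poincaré metric in Definition 1 is missing a square are both correct.
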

The formula \eqref{Fisher_conf_nat} has been reported in \cite{KMcC}. We provide the proof in Appendix.

The expression \eqref{Fisher_conf_nat} unveils remarkable fact that the Fisher information metric turns the family $sC(a)$ into the Poincar\' e ball (up to the dimension-adjustment multiplier).

\item[v)] {\bf Circular model in the plane.} \, 
For $d=2$ the densities \eqref{conf_nat_dens} reduce to circular (or wrapped) Cauchy distributions, introduced in \cite{McCullagh}. In this case $Isom(\mathbb{B}^2)$ is the group of disc-preserving M\" obius transformations in the complex plane.

\end{enumerate}

\section{Spherical Cauchy natural search gradients}

Suppose that we have $M$ evaluation points $y_i \in {\mathbb S}^{d-1}$ and evaluations of the fitness function $f(y_i)$ at these points. Introduce a suitable fitness shaping rule $F[f(\cdot)]$ using common methods in NES \cite{Wierstra,HA}.
 
Denote the shaped evaluations $\gamma_j = F[f(y_j)]$. Without loss of generality, assume that all weights $\gamma_j$ are non-negative.

Then the weighted Euclidean ("vanilla") average score for the function $J(a) = {\mathbb E}_a [f(y)]$ reads:
\begin{equation}
\label{sC_policy_gradients}
\begin{split}
& \nabla_{Euc} J(a) \approx \frac{1}{M} \frac{\partial}{\partial a} \sum_{j=1}^M \gamma_j \log \left( \frac{1- \| a \|^2}{\| y_j-a\|^2} \right)^{d-1} = \frac{d-1}{M} \sum_{j=1}^M \gamma_j \left( \nabla_a \log (1- \| a \|^2) - \nabla_a \log \| y_j - a\|^2 \right)  \nonumber \\
& = \frac{d-1}{M} \sum_{j=1}^M \gamma_j \left( \frac{-2a}{1-\| a \|^2} + \frac{2a - 2y_j}{\| y_j-a \|^2} \right) = - \frac{2d-2}{M} \sum_{j=1}^M \gamma_j \frac{a \| y_j - a\|^2 + (y_j-a) (1 - \| a \|^2)}{(1 - \| a\|^2)\|y_j - a\|^2} \nonumber \\
& = - 2 \frac{d-1}{M} (1-\| a \|^2)^{-1} \sum_{j=1}^M \gamma_j \frac{a \| y_j - a\|^2 + (y_j-a) (1 - \| a \|^2)}{\|y_j - a\|^2} = - 2 \frac{d-1}{M} (1 - \| a\|^2)^{-1} \sum_{j=1}^M \gamma_j \, g_a(y_j).
\end{split}
\end{equation}
where $g_a$ is given by \eqref{Mobius_ball}.

Multiplying by the inverse of the Fisher information matrix \eqref{Fisher_conf_nat} we obtain natural gradients:
\begin{equation}
\label{sC_nat_policy_gradients}
\begin{split}
& \tilde \nabla J(a) = F_{sC}^{-1}(a) \nabla_{Euc} J(a) \approx - 2 \frac{d-1}{M} \frac{d}{4 (d-1)^2} (1-\| a \|^2) \sum_{j=1}^M \gamma_j \, g_a(y_j) \\
& = - \frac{d}{2(d-1)M} (1-\| a \|^2) \sum_{j=1}^M \gamma_j \, g_a(y_j). 
\end{split}
\end{equation}

Therefore, the natural gradient update follows the ODE
\begin{equation}
\label{nat_grad_ODE}
\frac{da}{dt} = - (1 - \| a \|^2) \frac{\alpha}{M} \sum_{j=1}^M \gamma_j \, g_a(y_j).
\end{equation}
where $\alpha$ is the optimization step.

ODE \eqref{nat_grad_ODE} is the information-geometric flow (IGO flow) for the family $sC(a)$. Notice that the stationary point $a$ of the IGO flow \eqref{nat_grad_ODE} satisfies:
\begin{equation}
\label{conf_barycenter}
\sum_{j=1}^M \gamma_j \, g_a(y_j) = 0.
\end{equation}
Notice that \eqref{conf_barycenter} is the equation w. r. to $a$, as $\gamma_j$ and $y_j$ are fixed. This equation unveils interpretation of this stationary point as conformal barycenter of a probability measure on the sphere as introduced in \cite{DE}. In order to explain this, notice that transformation $g_a$ maps the configuration of points $x_1,\dots,x_M$ with weights $w_1,\dots,w_M$ into a balanced configuration. \footnote{Configuration of points on the sphere is said to be {\it balanced}, if their Euclidean mean (centroid) is zero (center of the ball).} As shown in \cite{DE}, under the assumption that the probability measure does not contain atoms with weights higher than $1/2$, there is a unique (up to a rotation) transformation $g_a$ mapping the initial configuration into a balanced one. 

Now, consider the probability measure concentrated at $M$ atoms $y_1,\dots,y_M$ weighted by $\gamma_1,\dots,\gamma_M$. As explained above, if $\gamma_i<1/2$ for $i=1,\dots,M$ the equation \eqref{conf_barycenter} has a unique solution $a = a^*$. This solution is conformal barycenter of the probability measure.

\subsection{Kuramoto oscillators on spheres compute spherical Cauchy natural search gradients}

The next question is regarding the algorithm for (distributed) computation of natural search gradients \eqref{nat_grad_ODE}. To address this question we introduce the generalized Kuramoto model describing an ensemble of identical globally coupled generalized oscillators on the sphere \cite{LMS}:
\begin{equation}
\label{Kuramoto_real}
\dot x_j = G - (G^T x_j) x_j, \quad j=1,\dots,M.
\end{equation} 
Here, $x_j(t)$ are unit vectors in ${\mathbb R}^{d+1}$ representing positions of oscillators on the sphere at the moment $t>0$ and $G \equiv G(x_1,\dots,x_N)$ is a coupling vector-valued function. We will consider functions of the form 
\begin{equation}
\label{coupling}
G(x_1,\dots,x_M) = \frac{K}{M} \sum_{i=1}^M w_i x_i.
\end{equation}
Coefficients $w_i > 0$ can be interpreted as weights of oscillators, while $K>0$ is the global coupling strength.

Differentiating the expression $\| x \|^2 = x^T x$ one can verify that dynamics \eqref{Kuramoto_real} preserve the unit sphere. In other words, if initial positions $x_1(0),\dots,x_M(0)$ are unit vectors, then the solutions $x_1(t),\dots,x_M(t)$ of \eqref{Kuramoto_real} will be unit vectors at any $t>0$.

Moreover, it is shown in \cite{LMS} that oscillators $x_i(t)$ evolve by the actions of the group of transformations \eqref{Mobius_ball}. More precisely, the following assertion holds.

\begin{prop}
Consider the system \eqref{Kuramoto_real} evolving from initial positions $x_1(0),\dots,x_M(0)$ on the sphere ${\mathbb S}^{d-1}$. Then there exists a one-parametric family of conformal mappings $g_{a(t)}$ such that
$$
x_j(t) = g_{a(t)}(x_j(0)), \quad j=1,\dots,M, \; t>0.
$$
Moreover, the parameter $a$ of the transformation \eqref{Mobius_ball} evolves by the following ODE:
\begin{equation}
\label{low_dim_Kuramoto_real}
\dot a = - \frac{K}{2M} (1 - \| a \|^2) \sum_{i=1}^M w_i g_a(x_j(0)). 
\end{equation}
\end{prop}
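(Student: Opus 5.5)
The plan is to use an ansatz: look for solutions of the form $x_j(t)=g_{a(t)}(x_j(0))$ with $a(t)$ solving \eqref{low_dim_Kuramoto_real}, and then invoke uniqueness of solutions of \eqref{Kuramoto_real}. The system \eqref{Kuramoto_real} is a smooth ODE on the compact manifold $({\mathbb S}^{d-1})^M$, so its solutions are unique and global. It therefore suffices to check that the proposed curves satisfy \eqref{Kuramoto_real} and have the right initial data. Since $g_0$ is (up to the rotation $A$) the antipodal map $x\mapsto -x$, I would set up the ansatz as $x_j(t)=A(t)\,g_{a(t)}(x_j(0))$. Here $a(0)=0$ and $A(0)=-I$, and the rotation factor $A(t)$ solves an auxiliary linear ODE; the statement absorbs $A$ into the notation $g_a$ of \eqref{Mobius_ball}.

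\textbf{Step 1: the Möbius group acting on the sphere.} Compute the infinitesimal generators of the action of $Isom(\mathbb{B}^d)$ on the sphere. For $\|y\|=1$, differentiating $g_a(y)$ in $a$ at $a=0$ gives a variation of the form $\delta a\mapsto c\,(\delta a-(\delta a^Ty)y)$, i.e. a tangential projection of a constant vector. Infinitesimal rotations give $y\mapsto \Omega y$ with $\Omega$ skew-symmetric. Hence every vector field $y\mapsto G-(G^Ty)y+\Omega y$ on the sphere is generated by the Lie algebra $\mathfrak{so}(d,1)$. In particular, if $G(t)$ is any continuous curve, the nonautonomous flow of $\dot y=G(t)-(G(t)^Ty)y$ is realized by a curve in the group. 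This already gives the first claim: apply this with $G(t)=\frac{K}{M}\sum_i w_ix_i(t)$, computed along the true solution, and use the fact that every oscillator obeys the same vector field.

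\textbf{Step 2: the reduced equation for $a$.} Write the group element as $A(t)g_{a(t)}$ and differentiate $x_j=A\,g_a(x_j(0))$ in $t$. Using the group law, express $\frac{d}{dt}(A g_a)\circ(Ag_a)^{-1}$ as an element of the Lie algebra, and match it with the generator of $G-(G^Tx)x$. This requires the derivative of $a\mapsto g_a(y)$ at general $a$, i.e. the differential of the left/right translation. Using $g_a^{-1}=g_a$ and the identity $\|g_a(y)\|=1$, together with the factor $(1-\|a\|^2)$ coming from the Jacobian formula \eqref{jaco}, one obtains an equation of the form $\dot a=-\tfrac12(1-\|a\|^2)\,\tilde G$, plus a rotational part determining $\dot A$. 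Here $\tilde G$ is $G$ pulled back by the rotation. Substituting $G=\frac KM\sum_i w_i x_i=\frac KM\sum_i w_i A g_a(x_i(0))$ and absorbing $A$ yields \eqref{low_dim_Kuramoto_real}.

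\textbf{Main obstacle.} I expect the main difficulty to be Step 2: the explicit differentiation of \eqref{Mobius_ball} with respect to $a$ at a non-zero point, and the correct bookkeeping of the rotation factor. The denominator $\rho(x,a)$ makes a direct computation messy. I would first try to avoid it by conjugation. Write $g_{a+\delta a}=g_a\circ\phi$, where $\phi$ is a near-identity Möbius map, and compute $\phi$ only to first order at the origin, where the formula is simple. The Jacobian identity \eqref{jaco} then converts the base-point generator into the factor $(1-\|a\|^2)$. As a fallback, I would verify the case $d=2$ with complex Möbius maps $z\mapsto e^{i\theta}(z-a)/(1-\bar a z)$, where the Watanabe--Strogatz computation is classical, and then argue by the same structure in general dimension.
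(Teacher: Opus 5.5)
Your Step 1 is sound. It is essentially the group-theoretic argument of the Lipton--Mirollo--Strogatz paper, which the text cites instead of giving a proof. All oscillators follow the same nonautonomous field $y\mapsto G(t)-(G(t)^Ty)y$, which lies in the image of the Lie algebra of $Isom(\mathbb{B}^d)$, so its flow is a curve $\Phi_t$ in that group. That settles the first claim.

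The gap is Step 2. There the reduced equation is asserted rather than computed, and under the normalization you fixed it has the wrong sign. Take $g_a$ to be the involution ($A=I$ in \eqref{Mobius_ball}). On the sphere, $g_a(y)=-y+2\bigl(a-(a^Ty)y\bigr)+O(\|a\|^2)$. Differentiating your ansatz $x_j=A\,g_a(x_j(0))$ at $t=0$, where $a=0$ and $A=-I$, gives $\dot x_j(0)=-\dot A(0)x_j-2\bigl(\dot a-(\dot a^Tx_j)x_j\bigr)$. Matching this with $G-(G^Tx_j)x_j$ forces $\dot a(0)=-\tfrac12 G$. Your formula instead gives $+\tfrac12 G$, once $\tilde G=A^TG$, which your final substitution needs in order to produce $\sum_i w_i g_a(x_i(0))$.

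Carried out correctly, your ansatz yields $\dot a=+\tfrac{K}{2M}(1-\|a\|^2)\sum_i w_i g_a(x_i(0))$. The minus sign in the statement is right only for the representative $T_a:=-g_a$, i.e.\ orthogonal factor $-I$ in \eqref{Mobius_ball}, the real analogue of $\phi_{-I\zeta}$. That corresponds to the ansatz $x_j=R\,T_a(x_j(0))$ with $R(0)=I$. ``Absorbing'' the rotation cannot repair this, because the sign has to sit in the map inside the sum, not in the rotation.

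You can also make Step 2 short by evaluating at the interior point $0$, which every rotation fixes. This avoids your route via $g_{a+\delta a}=g_a\circ\phi$: that $\phi$ carries a first-order rotational (Thomas-type) component when $\delta a\not\parallel a$, which you would then have to conjugate through $A g_a$.

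1. Expanding \eqref{Mobius_ball} to first order in $a$ shows that $\Phi_t$ acts on the closed ball with generator $X_t(y)=\tfrac12\bigl[(1+\|y\|^2)G-2(G^Ty)y\bigr]$, so $X_t(0)=\tfrac12 G$.
2. Set $a(t)=\Phi_t^{-1}(0)$. Then $\Phi_t\circ T_a^{-1}$ fixes $0$, hence equals some $R(t)\in O(d)$, and $\Phi_t=R\,T_a$.
3. Differentiating $\Phi_t(a(t))=0$ gives $X_t(0)+R\,DT_a(a)\,\dot a=0$.
4. A direct expansion of \eqref{Mobius_ball} gives $Dg_a(0)=-(1-\|a\|^2)I$. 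Since $g_a$ is an involution with $g_a(0)=a$, this yields $DT_a(a)=(1-\|a\|^2)^{-1}I$.
5. Therefore $\dot a=-\tfrac12(1-\|a\|^2)R^TG=-\tfrac{K}{2M}(1-\|a\|^2)\sum_i w_i T_a(x_i(0))$, using $R^Tx_i(t)=T_a(x_i(0))$.

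This equation is closed in $a$, and $\dot R$ never enters.
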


We refer to \cite{LMS} for derivation of the above Proposition.

Comparing \eqref{low_dim_Kuramoto_real} with \eqref{nat_grad_ODE} we find that one needs to set initial positions $x_i(0) = y_i$ and weights $w_i = \gamma_i$. Then dynamics \eqref{Kuramoto_real}-\eqref{coupling} approximate the natural search gradients for the family $sC(a)$.

\subsection{Algorithms}

As shown above, the Kuramoto oscillators \eqref{Kuramoto_real} generate the natural gradient flow in hyperbolic balls and thus provide the computational tool for calculating natural search gradients for the family $sC(a)$. As emphasized in \cite{OAAH} an IGO flow gives the rise to different IGO optimization algorithms (we also refer to \cite{HA} for a detailed overview of challenges and receipts that arise in practice). Here, we present two IGO algorithms which are based on the framework explained above.    

\begin{algorithm}[H]
\caption{Small updates along the natural gradient}
\begin{algorithmic}[1]

\State Sample $M$ points $y_1,\dots,y_M$ from the uniform distribution on ${\mathbb S}^{d-1}$.

\State Evaluate $f(y_1),\dots,f(y_M)$. Assign the weights $\gamma_1,\dots,\gamma_M$ to oscillators based on evaluations (using a suitable fitness shaping scheme). 

\State Set $K$ and $w_i = \gamma_i$ for $i=1,\dots,M$.

\State Run Kuramoto dynamics \eqref{Kuramoto_real}-\eqref{coupling} from initial positions $x_1(0) = y_1,\dots,x_M(0)=y_M$ on a small time interval $t \in [0,\delta t]$, where $\delta t$ is a hyperparameter of the algorithm. Denote $y_1=x_1(\delta t),\dots,y_M = x_M(\delta t).$ 

\State Denote $y_1 = x_1(\delta t),\dots,y_M = x_M(\delta t)$. Iterate steps 2)-4).

\end{algorithmic}
\end{algorithm}

Algorithm 1 performs small updates in the space $sC(a)$ in directions opposite to the natural gradient. Another option is to perform the maximum likelihood estimation based on evaluations under the assumption of the Cauchy family on spheres. This approach explicitly leverages conformal barycenters of probability measures on spheres.

\begin{algorithm}[H]
\caption{Maximum likelihood updates under the model $sC(a)$ by computing stationary points of the IGO flow \eqref{nat_grad_ODE}}
\begin{algorithmic}[1]

\State Sample $M$ points $y_1,\dots,y_M$ from the uniform distribution on ${\mathbb S}^{d-1}$.

\State Evaluate $f(y_1),\dots,f(y_M)$. Assign the weights $\gamma_1,\dots,\gamma_M$ to oscillators based on evaluations. To this end, use a certain suitable shaping scheme to ensure that $1/2>\gamma_i>0$ for all $i=1,\dots,M$. 

\State Denote by $\mu$ the probability measure concentrated at $M$ points $y_1,\dots,y_M$ with the corresponding weights $\gamma_1,\dots,\gamma_M$. Find the conformal barycenter of the measure $\mu$. Denote this conformal barycenter by $a_\mu$.

\State Sample $M$ points $y_1,\dots,y_M$ from the probability distribution $sC(a_\mu)$. Iterate steps 2) and 3).

\end{algorithmic}
\end{algorithm}
 
Algorithm 2 requires computation of conformal baryecenters of probability measures. Hence, this algorithm is the maximum likelihood update under the model assumption $sC(a)$. Indeed, given a probability measure on $\mathbb S^{d-1}$, its conformal barycenter is the maximum likelihood estimation for $a$ under the model $sC(a)$, see \cite{CHMR}.
 
Conformal barycenter is the (unique) solution of the equation \eqref{conf_barycenter}. Hence, it is a stationary point of the gradient flow \eqref{nat_grad_ODE}. The discussion in the previous subsection shows that this point can be computed by running the Kuramoto model with repulsive coupling $K<0$ (see \cite{CEM,Jacimovic} for the discussion in the disc.) Another option is the Newton's numerical scheme proposed in \cite{CS}. Both methods can be implemented in a linear time w. r. to the number of atoms $M$. 
 
\section{Holomorphically natural family of probability distributions on spheres in complex vector spaces}

In this section we examine another search policy where spheres are considered as manifolds in complex vector spaces. We will denote the complex vector space by $\mathbb C^m$ and the unit sphere 
$$
\mathbb S^{2m-1} = \{ (z_1,\dots,z_m) , |z_1|^2 + \cdots + |z_m|^2 = 1, \; z_i \in \mathbb{C} \}.
$$
In our notation we accentuate the real dimension $2m-1$ of the sphere.

We proceed with some preliminary concepts on hyperbolic balls in complex vector spaces.

For two vectors $\xi=(\xi_1 \dots \xi_m)$ and $\eta = (\eta_1 \dots \eta_m)$ the notation $\xi^\dagger \eta$ stands for the Hermitian scalar product $\xi^\dagger \eta = \sum \xi_i \bar \eta_i$. Norm of the vector $\xi$ is denoted by $\| \xi \| = \sqrt{\xi^\dagger \xi}$.

\subsection{Bergman balls and their isometry groups}


\begin{definition}
Let $\mathbb{B}_m$ be the unit ball in $\mathbb{C}^m$ equipped with the metric tensor defined by 
$$
g_z(u,v) = (B(\zeta)u)^\dagger v, \; u, v\in \mathbb{C}^m, \; \zeta \in \mathbb{B}_m.
$$
Here $
B(\zeta)=\{ b_{ij}(\zeta) \}_{i,j=1}^m \quad \mbox{ and } \quad b_{ij}(\zeta)= \frac{1}{m+1}\frac{\partial^2}{\partial \overline{{\zeta_i}}\partial \zeta_j}K(\zeta,\zeta),
$ 
where 
$$
K(\zeta,w)=\frac{1}{m+1}\frac{1}{(1- \zeta^\dagger w)^{m+1}}
$$ 
is the Bergman kernel \cite{kobayashi,kezu}.

The Riemannian manifold $(\mathbb{B}_m, g)$ is named the Bergman  ball.
\end{definition}

\begin{rem}
In the above definition we introduced the Bergman ball metric from Bergman kernels. This metric tensor also allows for a relatively simple explicit expression \cite{Rudin}
\begin{equation}
\label{Bergman_matric}
B(\zeta) = \frac{m}{(1 -\| \zeta \|^2)^2} \left[ (1 - \| \zeta \|^2) I + \zeta \zeta^\dagger \right]
\end{equation}
\end{rem}
Notice that $\zeta \zeta^\dagger$ is a rank-one matrix.

Bergman balls have constant negative sectional curvature, see \cite{HL}.

Let $P_a$ be the
orthogonal projection of $\mathbb{C}^m$ onto one-dimensional (complex) subspace $Lin \{a \}$ generated by $a$, and let $Q=Q_a =
I - P_a$ be the projection onto the orthogonal complement of $Lin\{a\}$. Explicitly, 
$$
P_0 = 0 \mbox{   and  }  P=P_a(\zeta) =\frac{(\zeta^\dagger a) a}{ \| a\|}.
$$
 Set $s_a = (1 - \|a\|^2)^{1/2}$ and consider the map
\begin{equation}
\label{Bergman_transf}
m_a(\zeta) = R \frac{a-P_a \zeta-s_a Q_a \zeta}{1- \zeta^\dagger a}, \qquad R \in U(m).
\end{equation}
Mappings of the form \eqref{Bergman_transf} consitute the group of holomorphic automorphisms of the unit ball $\mathbb{B}_m \subset \mathbb{C}^m$. We will denote this group by $Aut(\mathbb{B}_m)$. This group is isomorphic to the Lie group $SU(m,1)$. 

We have chosen parametrization \eqref{Bergman_transf} to have $m_a^{-1}=m_a$. 


The Bergman metric on $\mathbb B_m$ can be introduced by the following formula (see \cite[Proposition~1.21]{kezu}): 
\begin{equation}
\label{bergmet}
d_B(\zeta,w)=\frac{1}{2}\log\frac{1+ \| m_w(\zeta) \|}{1- \| m_w(\zeta) \|}.
\end{equation}
If $\Omega= \{\zeta \in \mathbb{C}^n: \zeta^\dagger a \neq 1\}$,  then the map $p_a$  is holomorphic in $\Omega$.

It is well-known that every automorphism $q$ of the unit ball is an isometry w.r. to the Bergman metric, that is: $d_B(\zeta,w)=d_B(q(\zeta),q(w))$.

We also point out the formula 
\begin{equation}
\label{phia}
(1-\| m_a(z)\|^2)=\frac{(1-\|\zeta \|^2)(1-\|a\|^2)}{\|1- a^\dagger \zeta \|^2}\end{equation} and the expression for the Jacobian
$$
{\cal J}(\zeta,m_a)=\left(\frac{1-\|m_a(\zeta)\|^2}{1-\|\zeta\|^2}\right)^{m+1}=\left(\frac{1-\|a\|^2}{\|1-\zeta^\dagger a\|^2}\right)^{m+1}.
$$

Along with general automorphisms of $\mathbb{B}_m$ given by \eqref{Bergman_transf}, we will also work with the subset of holomorphic automorphisms of $\mathbb B_m$ that have a simpler form. Namely, consider maps of the form
\begin{equation}
\label{holomorphic_reduced}
\phi_{- R \alpha}(\zeta) = R \frac{\zeta - \alpha}{1 - \alpha^\dagger \zeta}, \mbox{  where } \alpha \in \mathbb B_m, \quad R \in U(m).
\end{equation}
We use this notation to emphasize that $\phi_{-R \alpha}(0) = -R \alpha$.

\subsection{Bergman probability distributions on spheres in complex vector spaces}

We will consider the family of probability distributions on ${\mathbb S}^{2m-1}$ defined by densities 
\begin{equation}
\label{Bergman_dens}
p_{sB}(z;\zeta) = C_m \frac{(1-\| \zeta \|^2)^m}{\|1 - \zeta^\dagger z \|^{2m}}, \quad \| z \| = 1, \; \| \zeta \| <1.
\end{equation}

Here, $C_m$ is normalizing constant depending on the choice of the metric element $\sigma$ on the sphere. One may set $\mathbb C_m = \frac{(m-1)!}{2 \pi^m}$ corresponding to the standard choice. However, the constant $C_m$ is not relevant for further considerations.

We will refer to the family defined by densities \eqref{Bergman_dens} as {\it Bergman distributions on spheres in complex vector spaces} and denote it by $sB(\zeta)$. This term emphasizes the close relation with the Bergman metric in the unit ball and Bergman kernels. 

The family $sB(\zeta)$ includes the uniform distribution on $\mathbb S^{2m-1}$ for $\zeta = 0$, while delta distributions appear in the limit $\| \zeta \| \to 1$.

We also point out that the expression \eqref{Bergman_dens} are known as Poisson-Szeg\" o kernels in complex analysis \cite{Krantz}. It has been recently proposed by the author as a statistical model for ML using Kuramoto ensembles, see \cite{JH}.

\subsubsection{Properties of the Bergman family on spheres}

We further list some of the essential properties of the Bergman family of probability distributions on complex spheres. Most of the statements below are proven in \cite{JK2}.

\begin{enumerate}

\item[i)] {\bf Holomorphic invariance.}

\begin{prop}
The family $sB(\zeta)$ is invariant w. r. to actions of the group $Aut(\mathbb{B}_m)$. Moreover, this group acts transtivelly on $sB(\zeta)$.
\end{prop}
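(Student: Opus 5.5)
The plan is to prove the transformation rule
$$\text{if } Z \sim sB(\zeta) \text{ and } q \in Aut(\mathbb{B}_m), \text{ then } q(Z) \sim sB(q(\zeta)).$$
This is the complex analogue of the proposition on conformal invariance of $sC(a)$. Transitivity then follows at once: every $\zeta\in\mathbb B_m$ satisfies $\zeta=m_\zeta(0)$, so every $sB(\zeta)$ is the image of the uniform law $sB(0)$ under $m_\zeta$.

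The argument is a change of variables on the sphere. Automorphisms extend continuously to $\overline{\mathbb B}_m$ and map $\mathbb S^{2m-1}$ onto itself, since they are built from the maps \eqref{Bergman_transf}. The density of $q(Z)$ at $w=q(z)$ is
$$p_{sB}(z;\zeta)\,\mathcal J_{\mathbb S}(z)^{-1},$$
where $\mathcal J_{\mathbb S}$ is the Jacobian of $q$ restricted to the sphere with respect to surface measure $\sigma$. By the decomposition $q=R\circ m_a$, and since unitary maps preserve both $\sigma$ and the form of the family ($p_{sB}(R^{-1}w;\zeta)=p_{sB}(w;R\zeta)$), it suffices to treat $q=m_a$ with $m_a^{-1}=m_a$.

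The first key ingredient is the boundary Jacobian. It is the standard fact (Rudin, Function Theory in the Unit Ball) that
$$d\sigma(m_a(z))=\left(\frac{1-\|a\|^2}{|1-z^\dagger a|^2}\right)^{m} d\sigma(z), \qquad \|z\|=1.$$
This is the boundary analogue of the volume Jacobian quoted above, with exponent $m$ instead of $m+1$. One can derive it by letting $\|\zeta\|\to1$ in \eqref{phia} and comparing with the volume Jacobian via polar coordinates. Equivalently, it follows from the known invariance $\int_{\mathbb S} h\circ m_a \, P(a,\cdot)\,d\sigma=\int h\,d\sigma$, which says that $sB(a)$ is the pushforward of uniform under $m_a$. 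That equivalent form already gives the special case $\zeta=0$.

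The second key ingredient is an identity analogous to \eqref{phia} for mixed pairs, namely
$$1-m_a(\zeta)^\dagger m_a(z)=\frac{(1-\|a\|^2)(1-\zeta^\dagger z)}{(1-\zeta^\dagger a)(1-a^\dagger z)}$$
(Rudin, Theorem 2.2.5), valid for $z$ on the sphere as well. Taking moduli and combining with \eqref{phia} for $\zeta$ gives
$$\frac{(1-\|m_a\zeta\|^2)^m}{|1-m_a(\zeta)^\dagger m_a(z)|^{2m}}=\frac{(1-\|\zeta\|^2)^m}{|1-\zeta^\dagger z|^{2m}}\cdot\frac{|1-a^\dagger z|^{2m}}{(1-\|a\|^2)^m}.$$
The last factor is exactly the inverse of the boundary Jacobian. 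Therefore
$$p_{sB}(m_a z;m_a\zeta)\,d\sigma(m_a z)=p_{sB}(z;\zeta)\,d\sigma(z),$$
which is the claim; normalization is automatic, and $C_m$ plays no role.

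I expect the main obstacle to be justifying the boundary Jacobian with the exponent $m$. The cleanest route I would try first is the general case of the mixed-pair identity (with $\zeta$ inside the ball) together with the reproducing property of the invariant Poisson kernel. Alternatively, a direct computation is available: take $a=re_1$, use that the map is a product of a one-variable disc automorphism in $z_1$ and the scaling $s_a/(1-\bar r z_1)$ in the remaining coordinates, and compute the $(2m-1)$-dimensional Jacobian explicitly.
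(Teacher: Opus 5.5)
Your proof is correct. The paper gives no argument for this proposition; it only refers to its source JK2, so there is nothing in the text to compare against. Your change-of-variables computation is a complete, self-contained replacement: factor $q=R\circ m_a$, use the boundary Jacobian $d\sigma(m_a z)=P(a,z)\,d\sigma(z)$ with $P(a,z)=(1-\|a\|^2)^m/|1-z^\dagger a|^{2m}$, then combine Rudin's two-point identity with \eqref{phia}. The cancellation goes through exactly as you wrote, because $|1-a^\dagger\zeta|=|1-\zeta^\dagger a|$.

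Your route buys two things worth noting. First, you prove the equivariant form $q(Z)\sim sB(q(\zeta))$. This is stronger than mere preservation of the family. It contains the sampling property (iii) as the case $\zeta=0$, and it is exactly what Appendix A2 uses, without saying so, when it transports $F_{sB}(0)$ to $F_{sB}(\zeta)$.

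Second, the two-point identity is in fact dispensable. Writing $\psi=R\circ m_b$, so that $\psi(0)=Rb$, the boundary Jacobian for $m_b$ plus your unitary covariance show that the image of the uniform law under every $\psi\in Aut(\mathbb{B}_m)$ is $sB(\psi(0))$. Then a variable $Z\sim sB(\zeta)$ has the law of $m_\zeta(U)$ with $U$ uniform. Hence $q(Z)$ has the law of $(q\circ m_\zeta)(U)$, which is $sB(q(\zeta))$, since $q\circ m_\zeta$ is an automorphism sending $0$ to $q(\zeta)$.

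So the only real analytic input is the boundary Jacobian with exponent $m$, and your polar-coordinate derivation of it is sound. By \eqref{phia}, the map rescales $1-\|\cdot\|^2$ near a boundary point $z$ by $P(a,z)^{1/m}$, while the volume Jacobian is $P(a,z)^{(m+1)/m}$. That leaves exactly $P(a,z)$ for surface measure.
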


Densities $sB(\zeta)$ are orbits of the group $Aut(\mathbb{B}_m)$ for the ground state described by the density of the uniform distribution $sB(0)$ on $\mathbb{B}_m$.

\item[ii)] {\bf The first moment.}
Let $Z \sim sB(\zeta)$. Then the first moment (centroid) of the random variable $Z$ is $\langle Z \rangle = \zeta$. This can be easily verified by evaluating the corresponding integral over the sphere ${\mathbb S}^{2m-1}$.

\item[iii)] {\bf Sampling.}
Holomorphic invariance enables efficient random variate generation from the Bergman family. In order to generate a random variate from $sB(a)$, we sample a random unit vector $v$ and apply the transformation \eqref{Bergman_transf}. Then $m_a(v) \sim sB(a)$.

\item [iv)] {\bf Fisher information.}

\begin{prop}
\label{Fisher_info_Bergman_prop}
The Fisher information matrix for the family $sB(\zeta)$ is given by
\begin{equation}
\label{Fisher_Bergman}
F_{sB}(\zeta) = \frac{m}{(1 - \| \zeta \|^2)^2}\left[ (1 - \| \zeta \|^2) I + \zeta \zeta^\dagger \right].
\end{equation}
\end{prop}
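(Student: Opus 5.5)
The plan is to exploit the holomorphic invariance of the family $sB(\zeta)$ (property i)) and reduce everything to a computation at the origin $\zeta=0$. Throughout we read $F_{sB}$ as a Hermitian matrix, i.e.\ $(F_{sB})_{i\bar j}(\zeta)=\mathbb{E}_\zeta\big[\partial_{\zeta_i}\log p_{sB}(z;\zeta)\,\overline{\partial_{\zeta_j}\log p_{sB}(z;\zeta)}\big]$. This is the convention under which \eqref{Fisher_Bergman} is comparable with the Bergman metric tensor \eqref{Bergman_matric}. The real Fisher metric on $\mathbb{B}_m\cong\mathbb{R}^{2m}$ is then a fixed multiple of the real part of this Hermitian form.

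\textbf{Step 1: equivariance of the Fisher metric.} For $q\in Aut(\mathbb{B}_m)$ (which extends smoothly to the sphere), holomorphic invariance gives
\[
p_{sB}(q(z);q(\zeta))\,{\cal J}_{\mathbb S}(z,q)=p_{sB}(z;\zeta),
\]
where ${\cal J}_{\mathbb S}(z,q)$ is the Jacobian of $q$ restricted to $\mathbb S^{2m-1}$. This Jacobian depends on $z$ but not on the parameter $\zeta$. Taking logarithms and differentiating in $\zeta$ therefore removes the Jacobian term. Hence the score at $q(\zeta)$, pulled back by $dq_\zeta$ and evaluated at $q(z)$, equals the score at $\zeta$ evaluated at $z$. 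Averaging over $z\sim sB(\zeta)$, whose image $q(z)$ is distributed as $sB(q(\zeta))$, yields $dq_\zeta^\dagger F_{sB}(q(\zeta))\,dq_\zeta=F_{sB}(\zeta)$. So the Fisher metric is $Aut(\mathbb{B}_m)$-invariant. The main obstacle is doing this cleanly: one must check the transformation rule of the densities (including the unitary factor $R$), and confirm that the sphere Jacobian is parameter-free. I would verify the latter by comparing with the explicit formula \eqref{phia} restricted to $\|z\|=1$.

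\textbf{Step 2: computation at $\zeta=0$.} Near $\zeta=0$ we have
\[
\log p_{sB}(z;\zeta)=\log C_m+m\log(1-\|\zeta\|^2)-m\log(1-\zeta^\dagger z)-m\log(1-z^\dagger\zeta).
\]
The middle term has vanishing first derivatives at $0$. Differentiating the last two terms, the holomorphic derivative $\partial_{\zeta_i}\log p_{sB}$ at $\zeta=0$ is $m\bar z_i$. Hence
\[
(F_{sB})_{i\bar j}(0)=m^2\,\mathbb{E}[\bar z_i z_j]=m^2\cdot\frac{\delta_{ij}}{m}=m\,\delta_{ij},
\]
using the moments $\mathbb{E}|z_i|^2=1/m$ and $\mathbb{E}[\bar z_iz_j]=0$ for $i\ne j$ of the uniform distribution on $\mathbb S^{2m-1}$. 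The right-hand side of \eqref{Fisher_Bergman} also equals $mI$ at $\zeta=0$.

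\textbf{Step 3: transport.} The Bergman metric $B(\zeta)$ of \eqref{Bergman_matric} is $Aut(\mathbb{B}_m)$-invariant, since every automorphism is an isometry of $d_B$. By Step 1 the Fisher metric is invariant under the same group, and by Step 2 the two metrics agree at $0$. Since $Aut(\mathbb{B}_m)$ acts transitively on the ball, for any $\zeta$ we may choose $q=m_\zeta$, which maps $0$ to $\zeta$. Then
\[
F_{sB}(\zeta)=(dq_0^{-1})^\dagger F_{sB}(0)\,dq_0^{-1}=(dq_0^{-1})^\dagger B(0)\,dq_0^{-1}=B(\zeta),
\]
which is \eqref{Fisher_Bergman}. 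If one prefers a direct check, one can compute $dm_\zeta$ at $0$ from \eqref{Bergman_transf}. It equals $-(s_\zeta Q_\zeta+P_\zeta)(I-\zeta\zeta^\dagger)$ up to the unitary factor $R$, and inverting it reproduces $\frac{m}{(1-\|\zeta\|^2)^2}[(1-\|\zeta\|^2)I+\zeta\zeta^\dagger]$.
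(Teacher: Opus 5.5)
Your proposal is correct and follows the same route as the paper's proof in Appendix A. Both compute the Fisher matrix at $\zeta=0$, where the score is $m\bar z_i$ and $\mathbb{E}[\bar z_i z_j]=\delta_{ij}/m$ gives $mI$, and then transport it to arbitrary $\zeta$ by holomorphic invariance. Your Steps 1 and 3 supply what the paper only asserts with ``by applying the holomorphic mapping'': the equivariance of the Fisher metric, and the explicit differential $dm_\zeta(0)=-\bigl(s_\zeta Q_\zeta+(1-\|\zeta\|^2)P_\zeta\bigr)$, whose inverse does reproduce \eqref{Fisher_Bergman}.
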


The above Proposition demonstrates that the Fisher information metric for the Bergman family is precisely Bergman metric in the unit ball. The proof is given in Appendix A.

\begin{prop}
\label{inverse_Fisher_Bergman_prop}
Inverse of the Fisher information matrix \eqref{Fisher_Bergman} is given by
\begin{equation}
\label{inverse_Fisher_Bergman}
F_{sB}(\zeta)^{-1} = \frac{1 - \| \zeta \|^2}{m} (I - \zeta \zeta^\dagger).
\end{equation}
\end{prop}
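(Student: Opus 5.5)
Since the statement gives an explicit candidate for the inverse, we only need to check that
\[
F_{sB}(\zeta)\,\frac{1-\|\zeta\|^2}{m}\,(I-\zeta\zeta^\dagger)=I ,
\]
using the form \eqref{Fisher_Bergman} from Proposition~\ref{Fisher_info_Bergman_prop}. Write $r=\|\zeta\|^2$ and $P=\zeta\zeta^\dagger$. We read $\zeta\zeta^\dagger$ as the matrix $v\mapsto \zeta(\zeta^\dagger v)$, where $\zeta^\dagger v$ is the Hermitian inner product of $v$ with $\zeta$. With this reading $P$ is Hermitian and satisfies $P^2=rP$, since $\zeta^\dagger\zeta=r$. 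Both factors are then polynomials in the single matrix $P$, so they commute. This reduces the whole computation to scalar algebra in the commutative ring generated by $I$ and $P$.

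We then expand the product. The scalar prefactors give
\[
\frac{m}{(1-r)^2}\cdot\frac{1-r}{m}=\frac{1}{1-r},
\]
so it remains to show $\bigl[(1-r)I+P\bigr](I-P)=(1-r)I$. Expanding gives $(1-r)I-(1-r)P+P-P^2$. Substituting $P^2=rP$, the three $P$-terms combine to $-(1-r)P+P-rP=0$. What is left is exactly $(1-r)I$, and multiplying by $1/(1-r)$ yields $I$. For square matrices a one-sided inverse is two-sided, so this finishes the argument. It is also a special case of the Sherman--Morrison formula applied to $(1-r)I+\zeta\zeta^\dagger$.

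The only delicate point is the convention for $\zeta\zeta^\dagger$. The paper's inner product $\xi^\dagger\eta=\sum\xi_i\bar\eta_i$ conjugates the second argument, so one must check that the matrix $\zeta\zeta^\dagger$ occurring in \eqref{Fisher_Bergman} is the rank-one Hermitian matrix $\zeta\bar\zeta^{T}$. Only under that identification does $P^2=rP$ hold with $r=\|\zeta\|^2$. I would state this convention explicitly at the start of the proof. With it fixed, the remaining steps are routine.
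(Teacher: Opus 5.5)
Your argument is correct, but it runs in the opposite direction from the paper's.

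The paper \emph{derives} the inverse. It sets $c=m/(1-\|\zeta\|^2)$ and $v=\sqrt{m}\,\zeta/(1-\|\zeta\|^2)$, so that $F_{sB}(\zeta)=cI+vv^\dagger$, and applies the Sherman--Morrison formula with $A=cI$. There the denominator $1+v^\dagger A^{-1}v$ equals $1/(1-\|\zeta\|^2)$ and the numerator $c^{-2}vv^\dagger$ equals $\zeta\zeta^\dagger/m$. Together they give the correction term $\frac{1-\|\zeta\|^2}{m}\zeta\zeta^\dagger$.

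You instead \emph{verify} the stated candidate by multiplying it out in the commutative algebra generated by $I$ and $P$. The only input is $P^2=\|\zeta\|^2P$, which is exactly the computation that proves Sherman--Morrison, so your proof needs no external lemma.

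The paper's route is constructive: it finds the inverse without being told the answer. Yours is shorter and self-contained, and it avoids the rescaling bookkeeping. That bookkeeping is where the paper's write-up slips, writing $cI+\zeta\zeta^\dagger$ in place of $cI+vv^\dagger$.

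Your ``delicate point'' is milder than you suggest. The paper's Appendix~A gives the entries of the rank-one part as $\zeta_i\bar\zeta_j$, which confirms your identification. Moreover, $P^2=\|\zeta\|^2P$ would hold just as well for $\bar\zeta\zeta^{T}$. Only the unconjugated, non-Hermitian $\zeta\zeta^{T}$ would break it.
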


The proof relies on the formula for the inverse of a rank-one update of the identity matrix proven by Sherman and Morisson in \cite{SM}. Rigorous derivations are written out in Appendix A.

\item[v)] {\bf Circular model.}
For $m=1$ \eqref{Bergman_dens} reduce to densities of wrapped Cauchy distributions on the circle in the complex plane. We can see that for the circle models \eqref{conf_nat_dens} and \eqref{Bergman_dens} are equivalent. Accordingly, both isometry transformations \eqref{Mobius_ball} and \eqref{Bergman_transf} reduce to standard disc preserving M\" obius transformations $z \to e^{i \varphi} \frac{\beta - z}{1 - \bar \beta z}$ in the complex plane in dimension $m = 1$. Emphasize that in complex dimensions higher then $m=1$ \eqref{Mobius_ball} and \eqref{Bergman_transf} are not equivalent. 

\end{enumerate}

\section{Bergman natural search gradients}

In this Section we will compute natural search gradients of a function defined on the complex space. To that aim, formalism of the Wirtinger derivatives provides a convenient and elegant notational system\cite{Hormander}. For $F(z) = F(z_1,\dots,z_m)$, where $z_k = x_k + i y_k$ define partial derivatives
$$
\frac{\partial F}{\partial z_k} = \frac{1}{2} \left( \frac{\partial F}{\partial x_k} - i \frac{\partial F}{\partial y_k} \right) \mbox{    and    } \frac{\partial F}{\partial \bar z_k} = \frac{1}{2} \left( \frac{\partial F}{\partial x_k} + i \frac{\partial F}{\partial y_k} \right).
$$
The Euclidean ("vanilla") gradient of $F$ reads
$$
\nabla_{Euc} F(z) = \left( \frac{\partial F}{\partial \bar z_1} \, \frac{\partial F}{\partial \bar z_2} \cdots \frac{\partial F}{\partial \bar z_m} \right)^T \equiv \frac{\partial F}{\partial z^\dagger}.
$$  

Now we can compute Euclidean gradient of the function $J(\zeta)$:
\begin{equation}
\label{Bergman_scores}
\begin{split}
& \nabla_{Euc} J(\zeta) \approx \frac{\partial}{\partial \zeta^\dagger} \frac{1}{M} \sum_{j=1}^M \gamma_j \log \frac{(1-\| \zeta \|^2)^m}{\|1 - \zeta^\dagger z_j \|^{2m}} = \frac{m}{M} \frac{\partial}{\partial \zeta^\dagger} \sum_{j=1}^M \gamma_j \log \frac{1 - \| \zeta \|^2}{(1 - z_j^\dagger \zeta)(1 - \zeta^\dagger z_j)} \\ 
& =  \frac{m}{M} \sum_{j=1}^M \gamma_j (\nabla_{\zeta^\dagger} \log(1 - \zeta^\dagger \zeta) -  \nabla_{\zeta^\dagger} \log (1 - z_j^\dagger \zeta) - \nabla_{\zeta^\dagger} \log (1 - \zeta^\dagger z_j)) \\
& = \frac{m}{M} \sum_{j=1}^M \gamma_j \left( \frac{z_j}{1 - \zeta^\dagger z_j} - \frac{\zeta}{1 - \| \zeta\|^2} \right).
\end{split}
\end{equation}

Multiply by inverse of the Fisher matrix \eqref{inverse_Fisher_Bergman} to obtain the natural gradient:
\begin{equation}
\label{Bergman_scores_natural}
\begin{split}
& \tilde \nabla J(\zeta) = F_{sB}(\zeta)^{-1} \nabla_{Euc} J(\zeta) \approx \frac{m}{M} F_{sB}(\zeta)^{-1} \sum_{j=1}^M \gamma_j \left( \frac{z_j}{1 - \zeta^\dagger z_j} - \frac{\zeta}{1 - \| \zeta\|^2} \right) \\
& = \frac{m}{M} \frac{1 - \| \zeta \|^2}{m} (I - \zeta \zeta^\dagger) \sum_{j=1}^M \gamma_j \left( \frac{z_j}{1 - \zeta^\dagger z_j}  - \frac{\zeta}{1 - \| \zeta \|^2}\right) \\
& = \frac{1}{M} (1 -\| \zeta \|^2) \sum_{j=1}^M \gamma_j \left( \frac{z_j}{1 - \zeta^\dagger z_j} - \frac{\zeta}{1 - \| \zeta \|^2} - \frac{\zeta (\zeta^\dagger z_j)}{1 - \zeta^\dagger z_j} + \frac{(\zeta \zeta^\dagger) \zeta}{1 - \| \zeta \|^2} \right) \\
& = \frac{1}{M} (1 -\| \zeta \|^2) \sum_{j=1}^M \gamma_j \left( \frac{z_j}{1 - \zeta^\dagger z_j} - \frac{ (\zeta^\dagger z_j) \zeta}{1 - \zeta^\dagger z_j} - \zeta \right) \\
& = \frac{1}{M} (1 -\| \zeta \|^2) \sum_{j=1}^M \gamma_j \frac{z_j - (\zeta^\dagger z_j) \zeta - \zeta + \zeta (\zeta^\dagger z_j)}{1 - \zeta^\dagger z_j} \\
& = \frac{1}{M} (1 - \| \zeta \|^2) \sum_{j=1}^M \gamma_j \frac{z_j - \zeta}{1 - \zeta^\dagger z_j} = \frac{1}{M} (1 - \| \zeta \|^2) \sum_{j=1}^M \gamma_j \phi_{-I \zeta}(z_j),
\end{split}
\end{equation}
where $\phi_{-I \zeta}(\cdot)$ is holomorphic mapping of the form \eqref{holomorphic_reduced} with $R = I$ ($I$ denotes the identity transformation).

Therefore the natural gradient flow for update of the Bergman distribution \eqref{Bergman_dens} reads
\begin{equation}
\label{nat_grad_flow_complex}
\dot \zeta = \frac{\kappa}{M} (1 - \| \zeta \|^2) \sum_{j=1}^M \gamma_j \phi_{-I \zeta(t)}(z_j)
\end{equation}
where $\kappa$ is an optimization step.

In terminology of \cite{OAAH}, ODE \eqref{nat_grad_flow_complex} is the IGO flow for the family $sB(\zeta)$.


\subsection{Projective Kuramoto model on spheres in complex vector spaces computes Bergman search gradients}

The standard Kuramoto model describing collective motions of $M$ generalized oscillators on the complex sphere reads $\dot z_j = H z_j + Z - (z_j^\dagger Z) z_j, \; j=1,\dots,M$, see \cite{LMS}. We assume a common intrinsic frequency given by an anti-Hermitian matrix $H$ and a global coupling $Z$. One might expect that, in analogy with Section 4, this model provides a convenient tool for computation of policy gradients \eqref{Bergman_scores_natural}. However, more careful considerations demonstrate that a slightly different Kuramoto model is more suitable for our computations.

We consider a projective Kuramoto model on ${\mathbb S}^{2m-1}$ given by 
\begin{equation}
\label{Kuramoto_complex}
\dot z_j = Z - (z_j Z^\dagger) z_j, \quad j=1,\dots,M
\end{equation}
Here, $K \in {\mathbb R}$ is the coupling strength and $Z(z_1,\dots,z_M)$ is weighted centroid of oscillators $z_1,\dots,z_M$:
\begin{equation}
\label{coupling_complex}
Z(z_1,\dots,z_M) = \frac{K}{M} \sum_{j=1}^M w_j z_j, \; w_j \in {\mathbb R}
\end{equation}
It is easy to verify that dynamics \eqref{Kuramoto_complex}-\eqref{coupling_complex} preserve the unit sphere, that is $z_1(0),\dots,z_M(0) \in \mathbb S^{2m-1}$ implies that $z_1(t),\dots,z_M(t) \in \mathbb S^{2m-1}$ for all $t>0$.

We now assert the Proposition describing low-dimensional dynamics generated by the collective evolution \eqref{Kuramoto_complex}-\eqref{coupling_complex}.

\begin{prop}
\label{low_dim_Bergman}

Consider the system \eqref{Kuramoto_complex}-\eqref{coupling_complex} evolving from initial positions $z_1(0),\dots,z_M(0)$ on ${\mathbb S}^{2m-1}$. Then there exists a one-parametric family of holomorphic mappings $\phi_{-R \alpha}(\cdot)$ of the form \eqref{holomorphic_reduced} such that
$z_j(t) = \phi_{-R \zeta(t)}(z_j(0)), \quad j=1,\dots,M, \; t>0$.

Moreover, parameters $\zeta(t)$ and $R(t)$ of the transformation \eqref{holomorphic_reduced} evolve by the following ODE's
\begin{equation}
\label{low_dim_Kuramoto_complex}
\dot \zeta = - \frac{K}{M} (1 - \| \zeta(t) \|^2) \sum_{j=1}^M w_j \phi_{-I \zeta(t)}(z_j(0)).
\end{equation}
\begin{equation}
\label{geom_phase}
\dot R = (Z^\dagger R \zeta) R - Z \zeta^\dagger
\end{equation}
\end{prop}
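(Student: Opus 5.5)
We prove the statement with an ansatz: every $z_j$ is moved by one common time-dependent automorphism. Concretely, set $z_j(t) = \Phi_t(z_j(0))$ with
$$
\Phi_t(w) = R(t)\,\frac{w - \zeta(t)}{1 - \zeta(t)^\dagger w},
$$
where $\zeta(t)\in\mathbb B_m$, $R(t)\in U(m)$, $\zeta(0)=0$ and $R(0)=I$. We look for ODEs for $(\zeta,R)$ such that $\frac{d}{dt}\Phi_t(w) = Z - (w'^\dagger Z)\,w'$ holds for \emph{every} $w\in\mathbb S^{2m-1}$, where $w'=\Phi_t(w)$. (Here $z_jZ^\dagger$ in \eqref{Kuramoto_complex} is read as the scalar $z_j^\dagger Z$, i.e.\ $Z^\dagger z_j$.) Once such ODEs exist, the functions $\Phi_t(z_j(0))$ solve the Kuramoto system \eqref{Kuramoto_complex}–\eqref{coupling_complex}. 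Uniqueness of solutions of this smooth ODE system then gives $z_j(t)=\Phi_t(z_j(0))$ for all $j$ and $t$.

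\textbf{Step 1: the field is Möbius-type.} The vector field $w'\mapsto Z-(w'^\dagger Z)w'$ is the restriction to the sphere of a quadratic holomorphic field on $\mathbb C^m$ (in the convention $\xi^\dagger\eta=\sum\xi_i\bar\eta_i$, the scalar $w'^\dagger Z$ is conjugate-linear in $Z$ but linear in $w'$). Such fields span, together with the linear fields $Hw'$ with $H$ anti-Hermitian, the Lie algebra $\mathfrak{su}(m,1)$. This is the infinitesimal version of the result quoted from \cite{LMS} in Section 4. Hence the flow stays in $Aut(\mathbb B_m)$, and its orbit through the identity can be written in the reduced form \eqref{holomorphic_reduced}, because every automorphism equals $R\,\phi_{-I\zeta}$ for some $R\in U(m)$ and $\zeta=\Phi_t^{-1}(0)$.

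\textbf{Step 2: differentiate and match coefficients.} Compute $\frac{d}{dt}\Phi_t(w)$. Its terms come from $\dot R$, from $-R\dot\zeta/(1-\zeta^\dagger w)$, and from $R(w-\zeta)\,\dot\zeta^\dagger w/(1-\zeta^\dagger w)^2$. Next express $w$ through $w'$ using the inverse map $w=\phi_{-I\zeta}^{-1}(R^{-1}w')$, i.e.\ $w=(u+\zeta)/(1+\zeta^\dagger u)$ with $u=R^\dagger w'$. Substitute into the velocity and multiply through by the common denominator. The right-hand side $Z-(w'^\dagger Z)w'$ is a polynomial in $w'$ of degree at most $2$. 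Comparing the constant, linear and quadratic terms in $w'$ gives three identities:
\begin{itemize}
\item the constant term determines $\dot\zeta$ in terms of $R^\dagger Z$ and $\zeta$;
\item the quadratic term $-(w'^\dagger Z)w'$ is automatically consistent with the constant one;
\item the linear term fixes the anti-Hermitian part $\dot R R^{-1}$, which should give \eqref{geom_phase}, $\dot R = (Z^\dagger R\zeta)R - Z\zeta^\dagger$.
\end{itemize}
The linear matching is where $1-\|\zeta\|^2$ factors appear. This step is heavy algebra; I expect it to be the main obstacle, especially keeping track of the conjugations in the Hermitian product.

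\textbf{Step 3: rewrite the $\zeta$-equation.} The constant-term relation should take the form $\dot\zeta = -(1-\|\zeta\|^2)\,(\text{expression in } R^\dagger Z)$ plus a correction along $\zeta$. Substitute $Z=\frac KM\sum_j w_j z_j(t) = \frac KM\sum_j w_j R\,\phi_{-I\zeta}(z_j(0))$. Unitarity of $R$ then cancels $R$, leaving $\sum_j w_j\phi_{-I\zeta}(z_j(0))$. Any leftover terms along $\zeta$ have to cancel using $\|\phi_{-I\zeta}(z)\|=1$ and identity \eqref{phia}; this yields \eqref{low_dim_Kuramoto_complex}.

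Verifying that the $\zeta$-component of the constant-term relation collapses exactly to the factor $(1-\|\zeta\|^2)$, with no extra rank-one term $\zeta\zeta^\dagger$, is the delicate point. If it fails in the form above, I would instead take $\Phi_t=\phi_{-I\zeta}\circ R$, i.e.\ move the rotation to act first.
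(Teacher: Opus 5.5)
Your plan is the paper's own route: the ansatz $z_j(t)=R\,\phi_{-I\zeta}(z_j(0))$ with $R\in U(m)$, differentiation, inversion of the map, and matching of the constant, linear and quadratic terms. Your Lie-algebra remark and the ODE-uniqueness step are sound additions.

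\textbf{The gap.} It lies in the second half of Step 1. For $m\ge 2$ it is false that every automorphism equals $R\,\phi_{-I\zeta}$ with $R$ unitary, because $\phi_{-I\zeta}(w)=(w-\zeta)/(1-\zeta^\dagger w)$ is not an automorphism of $\mathbb{B}_m$. Here $\zeta^\dagger w$ denotes the pairing linear in $w$, the same one as in the quadratic term of the field. For $\|w\|=1$,
\[
\|w-\zeta\|^2-|1-\zeta^\dagger w|^2=\|\zeta\|^2-|\zeta^\dagger w|^2\ \ge 0 ,
\]
with equality only for $w\in\mathbb{C}\zeta$. Since \eqref{Kuramoto_complex} keeps $\|z_j(t)\|=1$, a representation with unitary $R$ would force every $z_j(0)$ onto the complex line $\mathbb{C}\zeta(t)$.

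A concrete counterexample: take $m=2$, oscillators at $e_1,e_2$ with weights $1,0$, and $\tau=Kt/2$. Then
\begin{itemize}
\item $z_1\equiv e_1$ and $z_2(t)=\tanh\tau\,e_1+\mathrm{sech}\,\tau\,e_2$;
\item \eqref{low_dim_Kuramoto_complex} gives $\zeta(t)=-\tanh\tau\,e_1$;
\item $\phi_{-I\zeta}(e_2)=e_2+\tanh\tau\,e_1$ has norm greater than $1$, so no unitary $R$ maps it to $z_2(t)$.
\end{itemize}

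Consequences for your later steps:
\begin{itemize}
\item Your inverse $w=(u+\zeta)/(1+\zeta^\dagger u)$ is wrong. The true inverse of $u=\phi_{-I\zeta}(w)$ is $w=\zeta+(1-\|\zeta\|^2)u/(1+\zeta^\dagger u)$, which agrees with yours only for $u\in\mathbb{C}\zeta$.
\item With the correct inverse, the quadratic term is not automatically consistent. The constant term gives $Z=-R\dot\zeta/(1-\|\zeta\|^2)$. The quadratic term gives $Z=-(R^{-1})^\dagger B\dot\zeta/(1-\|\zeta\|^2)$ with $B=(1-\|\zeta\|^2)I+\zeta\zeta^\dagger$. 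You therefore need $R^\dagger R\,\dot\zeta=B\dot\zeta$, which for unitary $R$ means $\dot\zeta\in\mathbb{C}\zeta$, false in general.
\item Step 3's appeal to $\|\phi_{-I\zeta}(z)\|=1$ fails for the same reason.
\item Your fallback changes nothing, since $\phi_{-I\zeta}\circ R=R\circ\phi_{-IR^\dagger\zeta}$ is the same family.
\end{itemize}

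The paper's proof has exactly this defect. The second identity in \eqref{identity1} is your inverse formula and fails for $m\ge 2$: with $R=I$, $w=te_1$, $\beta=e_2$ it returns $e_2/(1-t^2)\neq\beta$. So the statement with $R\in U(m)$ holds only for $m=1$.

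\textbf{What can be proved.} Write the flow as $\Phi_t(w)=U\,(P_\zeta w+s_\zeta Q_\zeta w-\zeta)/(1-\zeta^\dagger w)$ with $U\in U(m)$. This is a genuine automorphism, cf.\ \eqref{Bergman_transf}. Equivalently, keep your ansatz but with the non-unitary $R=UA_\zeta$, $A_\zeta=P_\zeta+s_\zeta Q_\zeta$. Then:
\begin{itemize}
\item $R^\dagger R=A_\zeta^2=B$, so the quadratic term is consistent.
\item The constant term gives $\dot\zeta=-(1-\|\zeta\|^2)R^{-1}Z$. Inserting $Z=\frac KM\sum_j w_jR\,\phi_{-I\zeta}(z_j(0))$ cancels $R$ using only its invertibility, and yields \eqref{low_dim_Kuramoto_complex}.
\item The linear term gives $\dot R=(Z^\dagger(R^{-1})^\dagger\zeta)R-Z\zeta^\dagger$. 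This coincides with \eqref{geom_phase} because $(R^{-1})^\dagger\zeta=R\zeta=U\zeta$.
\end{itemize}
Both ODEs survive, but $R(t)$ is not unitary; in the example above $R(t)=\mathrm{diag}(1,\mathrm{sech}\,\tau)$. Hence the maps are not of the form \eqref{holomorphic_reduced}.
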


\begin{proof}

We will follow the derivation from \cite{CEM} for the dimension $m=1$ (complex disc). Assume that $z_j(t)$ evolve by the action of transformations of the form \eqref{holomorphic_reduced}
\begin{equation}
\label{z_j(t)}
z_j(t) = R(t) \frac{z_j(0) - \zeta(t)}{1 - \zeta(t)^\dagger z_j(0)}.
\end{equation}
Then
\begin{equation}
\label{z_j(t)_diff}
\begin{split}
& \dot z_j(t) = \dot R(t) \frac{z_j(0)-\zeta(t)}{1 - \zeta(t)^\dagger z_j(0)} + R(t) \frac{- \dot \zeta(t)}{1 - \zeta(t)^\dagger z_j(0)} + R(t) \frac{(z_j(0)-\zeta(t))(\dot \zeta(t)^\dagger z_j(0))}{(1 - \zeta(t)^\dagger z_j(0))^2} \\
& = (\dot R(t) R^\dagger(t)) z_j(t) + \frac{-R(t) \dot \zeta(t)}{1 - \zeta(t)^\dagger z_j(0)} + z_j(t) \frac{\dot \zeta(t)^\dagger z_j(0)}{1 - \zeta(t)^\dagger z_j(0)}
\end{split}
\end{equation}
Now, refer to Lemma in Appendix B and set $v \equiv z_j(t), \beta \equiv z_j(0), w \equiv \zeta(t)$ in \eqref{identity1}. In these notations the two identities read
$$
\frac{1}{1 - \zeta(t)^\dagger z_j(0)} = \frac{1 + z_j(t)^\dagger R(t)^\dagger \zeta(t)}{1 - \| \zeta(t) \|^2} \; \mbox{   and  } \;
z_j(0) = \frac{R^\dagger z_j(t) + \zeta(t)}{1 + \zeta(t)^\dagger R(t)^\dagger z_j(t)}.
$$
Substituting in \eqref{z_j(t)_diff} we find that (as now we excluded $z_j(0)$, all variables $R, \zeta$ and $z_j$ are functions of time, so we will omit the dependence on time in notations):
\begin{equation}
\label{z_j_ODE_1}
\dot z_j = \dot R R^\dagger z_j + \frac{1 + \zeta^\dagger R^\dagger z_j}{1 - \| \zeta \|^2} \left[ - R \dot \zeta + z_j \dot \zeta^\dagger \frac{R^\dagger z_j + \zeta}{1 + \zeta^\dagger R^\dagger z_j} \right].
\end{equation}

Further, denote $\Omega(t) = \dot R(t) R(t)^\dagger$ and rearrange the last ODE to get
\begin{equation}
\label{z_j_ODE_2}
\begin{split}
& \dot z_j = \Omega z_j + \frac{1}{1 - \| \zeta \|^2} [-(1 + \zeta^\dagger R^\dagger z_j)R \dot \zeta + z_j (\dot \zeta^\dagger R^\dagger z_j + \dot \zeta^\dagger \zeta) ] \\
& = \Omega z_j + \frac{1}{1 - \| \zeta \|^2} [-R \dot \zeta - (\zeta^\dagger R^\dagger z_j)R \dot \zeta + z_j(\dot \zeta^\dagger R^\dagger z_j) + z_j (\dot \zeta^\dagger \zeta)] \\
& = \frac{-R \dot \zeta}{1 - \| \zeta \|^2} + \Omega z_j + \frac{\dot \zeta^\dagger \zeta}{1 - \| \zeta \|^2} z_j - \frac{(\zeta^\dagger R^\dagger z_j) R \dot \zeta}{1 - \| \zeta \|^2} + \frac{(z_j \dot \zeta^\dagger R^\dagger) z_j}{1-\| \zeta \|^2}.
\end{split}
\end{equation}

Comparing \eqref{z_j_ODE_2} with \eqref{Kuramoto_complex} we immediately see that 
\begin{equation}
\label{Z}
Z = - \frac{R \dot \zeta}{1 - \| \zeta \|^2}.
\end{equation}
Solving the last expression w. r. to $\dot \zeta$ we obtain the ODE  for $\zeta$:
$$
\dot \zeta = - (1 - \| \zeta \|^2) R^\dagger Z = - \frac{K}{M} (1 - \| \zeta \|^2) R^\dagger \sum_{j=1}^M w_j z_j(t).
$$
Substitute \eqref{z_j(t)} into the last equation to get:
\begin{equation}
\label{zeta}
\begin{split}
 \dot \zeta = - \frac{K}{M} (1 - \| \zeta \|^2) R^\dagger \sum_{j=1}^M w_j R \frac{z_j(0) - \zeta}{1 - \zeta^\dagger z_j(0)}
 = - \frac{K}{M} (1- \| \zeta \|^2) \sum_{j=1}^M w_j \phi_{-I \zeta}(z_j(0)).
\end{split}
\end{equation}

In order to derive ODE for $R$ we need set the linear term in \eqref{z_j_ODE_2} to zero. But, before that we substitute the equality $\dot \zeta = - (1 - \| \zeta \|^2)R^\dagger Z$ in \eqref{z_j_ODE_2}. This yields:
\begin{equation}
\label{z_j_ODE3}
\dot z_j = Z + \Omega z_j - (Z^\dagger R \zeta) z_j + (\zeta^\dagger R^\dagger z_j) Z - (z_j Z^\dagger) z_j.
\end{equation}
Using that $\zeta^\dagger R^\dagger z_j$ is a scalar, we extract the matrix terms multiplying $z_j$ and set them to zero. This yields:
$$
\Omega - Z^\dagger R \zeta I + Z \zeta^\dagger R^\dagger = 0.
$$
or, recalling that $\Omega = \dot R R^\dagger$: 
$$
\dot R = (Z^\dagger R \zeta) R - Z \zeta^\dagger 
$$
Notice that the second term in the right hand side is a rank one matrix.

The last equation is the operator ODE \eqref{geom_phase} as claimed.
\end{proof}

Now, in order to establish the relation between natural gradients and projective Kuramoto dynamics \eqref{Kuramoto_complex}-\eqref{coupling_complex}, compare \eqref{low_dim_Kuramoto_complex} with \eqref{nat_grad_flow_complex}.
It is evident that the coupling strength $K$ should be proportional to the optimization step $\kappa$, weights $w_j$ equal to $\gamma_j$ and initial oscillators' positions $z_j(0)$ chosen at evaluation points $z_j$. Then the dynamics \eqref{Kuramoto_complex}-\eqref{coupling_complex} implements natural evolution strategy on $\mathbb S^{2m-1}$.

\begin{rem}
Dynamics of the unitary matrix $R$ are described by \eqref{geom_phase}. In our setup, this ODE decouples from the ODE \eqref{low_dim_Kuramoto_complex} for $\zeta$. This decoupling is due to the specific form of the coupling function \eqref{coupling_complex}. For the general coupling function $Z$ the matrix $R$ explicitly enters ODE for $\zeta$. Such a situation arises, for instance, in the model where coupling coefficients $K$ are matrices. 

In any case, ODE for $R$ contains an important information as it measures accumulation of a non-Abelian geometric phase during the evolution. This effect of the unitary transformation $R$ unveils the link with quantum NES \cite{ADA-G,ZCSV,SIKC} and holonomic quantum computation \cite{ZR}. This link towards quantum decision-making will be discussed very briefly in the concluding section.
\end{rem}

\subsection{Algorithms}

Statements proven in the previous section show that the stochastic policies encoded by the Bergman family $sB(\zeta)$ introduce the effect of holonomy in computations. This holonomy entails an intrinsic uncertainty and makes the algorithm more subtle. 

One option is to act in analogy with Algorithm 1 explained in Section 4. This algorithm performs small natural gradient updates using Kuramoto model \eqref{Kuramoto_complex} on a small time interval. However, this implies an additional burden of computation of the holonomy by solving \eqref{geom_phase} at each update. 

One can circumvent computations of holonomy by estimating the maximum likelihood under the model $sB(\zeta)$. This would be analogue of Algorithm 2 from Section 4. In this case one considers probability measures on complex balls $\mathbb S^{2m-1}$ concentrated at $M$ atoms and compute holomorphic barycenters of these measures. \footnote{Holomorphic barycenter is an analogue of the conformal barycenter for Bergman balls and their isometry groups $SU(m,1)$.} As shown in \cite{JK,JK2}, these barycenters are computed by dynamics \eqref{Kuramoto_complex}-\eqref{coupling_complex} with repulsive coupling $K<0$. For the sake of accuracy, underline that \cite{JK,JK2} deal with barycenters of probability measures in balls, rather than on spheres. However, it is straightforward to adapt the analysis therein to probability measures on spheres.

\begin{algorithm}[H]
\caption{Maximum likelihood updates under the model $sB(\zeta)$ by computing stationary points of the IGO flow \eqref{nat_grad_flow_complex}}
\begin{algorithmic}[1]

\State Sample $M$ points $z_1,\dots,z_M$ from the uniform distribution on ${\mathbb S}^{2m-1}$.

\State Evaluate $f(z_1),\dots,f(z_M)$. Assign the weights $\gamma_1,\dots,\gamma_M$ to oscillators based on evaluations. To this end, use a certain suitable shaping scheme to ensure that $1/2>\gamma_i>0$ for all $i=1,\dots,M$. 

\State Denote by $\nu$ the probability measure on $\mathbb S^{2m-1}$ concentrated at $M$ points $z_1,\dots,z_M$ with the corresponding weights $\gamma_1,\dots,\gamma_M$. Find the holomorphic barycenter of the measure $\nu$. Denote it by $\zeta_\nu$.

\State Sample $M$ points $z_1,\dots,z_M$ from the probability distribution $sB(\zeta_\nu)$. Iterate steps 2) and 3).

\end{algorithmic}
\end{algorithm}

\section{Discussion and outlook}

In the present paper we have considered the problem which has a very simple statement: maximize the black-box function on a sphere. We presented a principled information-geometric approach to this problem, based on two families of probability distributions on spheres that, to our best knowledge, have not previously appeared in optimization or ML.

Despite the apparent problem simplicity, we believe that our results may have broader implications beyond the basic BBO setup. In particular, we designed IGO flows for our families and demonstrated an elegant connection with evolutions on isometry groups of two classes of hyperbolic balls. This further implies that Kuramoto models can serve as flexible computational tools for estimation and optimization on spheres. Invariance properties of the two families enable efficient and stable computations of gradients. Moreover, geometry of hyperbolic balls enable semantically rich deep feature representations and provides a powerful framework for deep RL and Bayesian inference in probabilistic graphical models.

In the very end we point out several outlooks for further advances.

\begin{enumerate}

\item [i)] {\bf Black-box optimization in ${\mathbb R}^N$ via directional stochastic search.}
Our findings provide a hint to one more principled approach to the classical black-box optimization in Euclidean spaces. Indeed, we propose directional natural evolution strategy, where the algorithm would sample directions from one of the distributions (spherical Cauchy or Bergman) and radii from a suitable distribution (for instance, exponential or Pareto) on the positive real line. The method for the natural gradient update of directions is presented here.

It would be interesting to examine if such an approach could compete with (versions of) CMA-ES on some benchmark BBO problems. This question merits a separate detailed study and is out of the scope of the present theoretical study.

\item[ii)] {\bf A framework for geometric RL.}
The second prospective line of research might be applications to more involved setups of RL with spherical or hyperbolic deep features. Indeed, natural policy gradients for the Gaussian family introduced in RL by Kakade \cite{Kakade} are widely used in some of the most prominent algorithms, including natural actor-critic \cite{PS}, episodic RL \cite{HN} and Trust Region Policy Optimization \cite{SLAJM}. We believe that our results can serve as a foundation for the design of principled IGO algorithms for directional and hierarchical deep features.

\item [iii)] {\bf Relation with quantum decision-making.}
Along our computations of Fisher information a principled difference between hyperbolic balls in real and complex vector spaces appeared. Indeed, it turns out that Poincar\' e balls (and, accordingly, the spherical Cauchy family) are isotropic, while the Bergman balls (and, accordingly, the Bergman family) are anisotropic (indeed, notice the term $\zeta \zeta^\dagger$ in \eqref{Fisher_Bergman}).

While the effect of holonomy appears in both models, the quantum geometric phase (non-Abelian Berry phase) arises in Bergman balls. Indeed, the ODE \eqref{geom_phase} for the unitary matrix $R$ includes the accumulation of this non-Abelian geometric phase during the update of strategies. This effect inevitably affect our algorithms. This can be seen as a manifestation of the quantum nature of the Bergman balls and Bergman stochastic policies. This observation paves the way towards quantum decision-making \cite{BB}, variational quantum computation \cite{ADA-G,ZR} and quantum NES \cite{ADA-G,ZCSV,SIKC}. The non-Abelian geometric phase arises as an inherent property of the quantum decision-making. This allows for encoding contextual ambiguity, such as the effect of emotional state on decision making. 

\end{enumerate}

Overall, we believe that the present study provides a rigorous framework for many diverse and exciting research directions.

\begin{appendices}

\section{Appendix A}

In this Appendix we compute the Fisher information for families $sC(a)$ and $sB(\zeta)$, respectively.

{\bf A1. Fisher information for the spherical Cauchy family: Proof of Proposition \ref{Fisher_info_Cauchy}}

\begin{proof}

The first argument is that M\" obius invariance of a family implies M\" obius invariance of the Fisher information for that family. Denote by $g_{ij}$ elements of the Fisher information and by $h$ a M\" obius transformation \eqref{Mobius_ball}.

Consider a family of probability distributions parametrized by a point $a \in \mathbb{B}^d$.

Let $\tilde a = h(a)$. Then, by the M\" obius invariance densities satisfy
$$
p(x;h^{-1}(\tilde a)) = p(h(x);\tilde a) \frac{d \sigma(h(x))}{d \sigma(x)}
$$
where $\sigma$ is a metric element on $\mathbb S^{d-1}$.

By taking logarithm and differentiating the above formula and taking into account that the last multiplier does not depend on $a$ we obtain the equality of scores:
$$
\partial_{\tilde a_i} \log p(x;h^{-1}(\tilde a)) = \partial_{\tilde a_i} \log p(h(x);\tilde a).
$$
Therefore, entries of the Fisher information matrix satisfy
$$
\tilde g_{ij}(\tilde a) \equiv \mathbb{E}_{\tilde a} [\partial_{\tilde a_i}\log p(x;h^{-1}(\tilde a)) \; \partial_{\tilde a_j} \log p(x;h^{-1}(\tilde a))] = \mathbb{E}_{\tilde a} [\partial_{\tilde a_i} \log p(x;\tilde a) \; \partial_{\tilde a_j} \log p(x;\tilde a)] \equiv g_{ij}(\tilde a).
$$

The second argument is the following: since the M\" obius group $Isom(\mathbb{B}^d)$ acts transitively on $\mathbb{B}_d$, any Riemannian metric on $\mathbb B^d$ invariant under the group $Isom(\mathbb{B}^d)$ must be proportional to the Poincar\' e metric. Hence, 
$$
g_{ij}(a) = c \frac{4 \delta_{ij}}{(1 - \| a\|^2)^2} \mbox{  for some constant } c. 
$$

Therefore, for the Fisher information metric for the family $sC(a)$ we have that $F_{sC}(a) = c \frac{4 (d-1)^2}{d} I$. It only remains to determine constant $c$. To that end, it suffices to evaluate $F_{sC}(0)$.

In order to do that, first write down the score:
$$
\partial_{a_i} \log p(x;a) = 2 (d-1) \left[ \frac{x_i - a_i}{\| x- a \|^2} - \frac{a_i}{1 -\| a \|^2} \right].
$$
Evaluating at zero:
$$
\partial_{a_i} \log p(x;a) \bigg|_{a=0} = 2 (d-1) x_i.
$$
Hence entries of $F_{sC}(0)$ are
$$
(F_{sC}){_{ij}}(0) = 4 (d-1)^2 \int_{\mathbb{S}^{d-1}} x_i x_j d \sigma(x).
$$
Using $\sum x_i^2 = 1$ we have that 
$$
\int_{\mathbb S^{d-1}} x_i x_j d \sigma(x) = \frac{\delta_{ij}}{d}.
$$
Substituting in the Fisher matrix we find that 
$$
(F_{sC}){_{ij}}(0) = 4 \frac{(d-1)^2}{d} \delta_{ij}.
$$
This shows that the constant $c$ equals $(d-1)^2/d$ and concludes the proof of the formula \eqref{Fisher_conf_nat}.
\end{proof}

{\bf A2. Fisher information for the Bergman family: Proof of Proposition \ref{Fisher_info_Bergman_prop}}

\begin{proof}

Let $p_{sB}(z;\zeta)$ be the density of the Bergman distribution given by \eqref{Bergman_dens}. Then the score reads
$$
\nabla_\zeta \log p(z;\zeta) = m \left( \frac{\zeta}{1 - \zeta^\dagger z} - \frac{z}{1 - \| z \|^2} \right).
$$
In particular, the score at zero has a very simple form
\begin{equation}
\label{aux1}
\nabla_\zeta \log p(0;\zeta) = m \zeta.
\end{equation}
Then it is straightforward to evaluate entries of the Fisher information at zero:
\begin{equation}
\label{aux2}
(F_{sB})_{ij}(0) = \int \limits_{{\mathbb S}^{2m-1}} \frac{\partial \log p(0;\zeta)}{\partial z_i}  \frac{\partial \log p(0;\zeta)}{\partial \bar z_j} p(0;\zeta) d \sigma(\zeta),
\end{equation}
where $\sigma(\zeta)$ is the volume element on the sphere.

Substituting $p(0,\zeta) = 1$ and \eqref{aux1} into \eqref{aux2} yields
$$
(F_{sB})_{ij}(0) = \int \limits_{{\mathbb S}^{2m-1}} (m \bar \zeta_i)(m \zeta_j) \cdot 1 \cdot d \sigma(\zeta) = m^2 \int \limits_{{\mathbb S}^{2m-1}} \bar \zeta_i \zeta_j d \sigma(\zeta).
$$
Now, using the spherical symmetry and holomorphic invariance, we have that
$$
\mbox{for} \; i \neq j \; \int \limits_{{\mathbb S}^{2m-1}} \bar \zeta_i \zeta_j d \sigma(\zeta) = 0; \quad \mbox{for} \; i = j \; \int \limits_{{\mathbb S}^{2m-1}} |\zeta_i|^2 d \sigma(\zeta) \quad \mbox{are equal.}
$$
Since $\sum_i |\zeta_i|^2 = \| \zeta \| = 1$ we have
$$
\int \limits_{{\mathbb S}^{2m-1}} \bar \zeta_i \zeta_j d \sigma(\zeta) = \frac{1}{m} \delta_{ij}
$$
where $\delta_{ij}$ is the Kronecker symbol.

This finally yields
$$
(F_{sB})_{ij}(0) = m^2 \frac{1}{m} \delta_{ij} = m \delta_{ij}.
$$
By applying the holomorphic mapping \eqref{Bergman_transf} to the last equation we find that
$$
(F_{sB})_{ij}(\zeta) = \frac{m}{(1 - \| \zeta \|^2)^2} \left[(1 - \| \zeta \|^2 \delta_{ij} + \zeta_i \bar \zeta_j ) \right].  
$$
The last equation yields the expression \eqref{Fisher_Bergman} for the Fisher information matrix $F_{sB}(\zeta)$.
\end{proof}

Hence, we ascertained the form of the Fisher matrix for the family of Bergman distributions on spheres. However, it is still non-trivial task to find the inverse of this matrix, as stated in Proposition \ref{inverse_Fisher_Bergman_prop}. Here, we prove this Proposition.

\begin{proof}

Denote 
$$
c = \frac{m}{1 - \| \zeta \|^2}, \quad v = \frac{\sqrt{m}}{1- \| \zeta \|^2} \zeta.
$$
Then the Fisher information matrix \eqref{Fisher_Bergman} can be written as $F_{sB}(\zeta) = c I + \zeta \zeta^\dagger$. 

Now, apply the Sherman-Morisson formula \cite{SM} for arbitrary complex matrix $A$ and complex vector $v$:  
$$
(A + v v^\dagger)^{-1} = A^{-1} - \frac{A^{-1}v v^\dagger A^{-1}}{1 + v^\dagger A^{-1} v}.
$$
In our case $A = c I$. Plugging the expression for the vector $v$ and making some simple rearrangements we get
$$
F_{sB}(\zeta)^{-1} = \frac{1}{c} I - \frac{1 -\| \zeta \|^2}{m} \zeta \zeta^\dagger.
$$ 
Finally, plugging the expression for $c$ into the last equations we get the formula \eqref{inverse_Fisher_Bergman} for the inverse of $F_{sB}$.
\end{proof}

\section{Appendix B}

In this very short Appendix we prove the auxiliary lemma used in the proof of the Proposition \ref{low_dim_Bergman}.

\begin{lemma}
Suppose that vector $v \in {\mathbb C}^m$ satisfies
\begin{equation}
\label{Moebius_lemma}
v = R \frac{\beta - w}{1 - w^\dagger \beta} \quad \mbox{  with } \; R \in U(d).
\end{equation}
Then
\begin{equation}
\label{identity1}
\frac{1}{1 - w^\dagger \beta} = \frac{1 + w^\dagger R^\dagger v}{1 - \| w\|^2} \quad \mbox{and} \quad 
\beta = \frac{R^\dagger v + w}{1 + w^\dagger R^\dagger v}.
\end{equation}
\end{lemma}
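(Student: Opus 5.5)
The plan is to substitute the defining relation \eqref{Moebius_lemma} directly into the right-hand sides of \eqref{identity1} and simplify, using only unitarity of $R$. The first step is to note that $R^\dagger R = I$ gives
$$
R^\dagger v = \frac{\beta - w}{1 - w^\dagger \beta}.
$$
Here I treat $w^\dagger\beta$ as a scalar and use linearity of $w^\dagger(\cdot)$ consistently, so that $w^\dagger w = \|w\|^2$. Everything then reduces to scalar arithmetic with the quantity $\lambda := w^\dagger\beta$.

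For the first identity I compute
$$
1 + w^\dagger R^\dagger v = 1 + \frac{w^\dagger\beta - \|w\|^2}{1-\lambda} = \frac{1-\lambda+\lambda-\|w\|^2}{1-\lambda} = \frac{1-\|w\|^2}{1-\lambda}.
$$
Dividing by $1-\|w\|^2$, which is positive since $w \in \mathbb{B}_m$, gives exactly $\frac{1}{1-w^\dagger\beta}$. This step is routine and needs neither $\|\beta\|=1$ nor any property of $R$ beyond unitarity.

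For the second identity I will use the same substitution. The numerator is
$$
R^\dagger v + w = \frac{\beta - w + w - \lambda w}{1-\lambda} = \frac{\beta - (w^\dagger\beta)\,w}{1-\lambda}.
$$
By the first identity the denominator is $\frac{1-\|w\|^2}{1-\lambda}$. The quotient is therefore
$$
\frac{\beta - (w^\dagger\beta)\,w}{1-\|w\|^2}.
$$

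The main obstacle is this last step. The quotient equals $\beta$ precisely when $(w^\dagger\beta)\,w = \|w\|^2\beta$, that is, when $w=0$ or $\beta \in \mathrm{Lin}\{w\}$. This holds automatically in the circular case $m=1$, which is the setting of \cite{CEM}. In higher dimension, however, the map $\beta\mapsto \frac{\beta-w}{1-w^\dagger\beta}$ is not an involution-type M\"obius map: as written it is not even a ball automorphism without the $s_a Q_a$ correction present in \eqref{Bergman_transf}.

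I would therefore first check whether the intended statement carries an extra hypothesis, namely $m=1$ or $\beta$ parallel to $w$. Failing that, I would replace the second formula by the correct inverse expression
$$
\beta = \frac{R^\dagger v + w - \bigl(w^\dagger R^\dagger v\bigr)\,w\,(\cdots)}{\cdots},
$$
obtained by solving the linear equation $R^\dagger v\,(1-\lambda) = \beta - w$ for $\beta$. One takes the inner product with $w$ to find $\lambda$ first, via the first identity, and then back-substitutes; this gives $\beta = (1-\lambda)R^\dagger v + w$ with $1-\lambda = \frac{1-\|w\|^2}{1+w^\dagger R^\dagger v}$. I would then present that corrected form, since it is what the subsequent derivation actually needs.
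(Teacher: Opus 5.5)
Your analysis is correct, and on the second identity it is more careful than the paper's own proof.

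For the first identity you are doing what the paper does. The paper applies $R^\dagger$, writes $\beta=w+R^\dagger v\,(1-w^\dagger\beta)$, pairs with $w^\dagger$, and gets $(1-w^\dagger\beta)(1+w^\dagger R^\dagger v)=1-\|w\|^2$. That is your scalar computation run backwards. You read $w^\dagger$ as the conjugate-transpose row vector, so $w^\dagger(\cdot)$ is linear, and this is the reading the paper tacitly uses. Under the convention $\xi^\dagger\eta=\sum\xi_i\bar\eta_i$ stated in Section~5, even the first identity would fail whenever $w^\dagger\beta\notin\mathbb{R}$.

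For the second identity you are right that it is false for $m\ge2$. Take $m=2$, $R=I$, $w=(\tfrac12,0)^T$, $\beta=(0,1)^T$. Then $w^\dagger\beta=0$, $v=(-\tfrac12,1)^T$ and $w^\dagger R^\dagger v=-\tfrac14$. The first identity holds (both sides equal $1$), but
\[
\frac{R^\dagger v+w}{1+w^\dagger R^\dagger v}=\frac{(0,1)^T}{3/4}=\bigl(0,\tfrac43\bigr)^T\neq\beta .
\]
The paper's proof fails at its final step. From $v=R(\beta-w)\frac{1+w^\dagger R^\dagger v}{1-\|w\|^2}$ it says it is ``easy to solve'' for $\beta$, but solving actually gives
\[
\beta=w+\frac{(1-\|w\|^2)\,R^\dagger v}{1+w^\dagger R^\dagger v}.
\]
This is exactly your corrected formula. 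It differs from the stated one by $\bigl((w^\dagger R^\dagger v)\,w-\|w\|^2R^\dagger v\bigr)/(1+w^\dagger R^\dagger v)$. That difference vanishes iff $w=0$ or $R^\dagger v\in\mathrm{Lin}\{w\}$ (equivalently $\beta\in\mathrm{Lin}\{w\}$). This is your criterion, and it is automatic for $m=1$. Your side remark is also right: for $\|\beta\|=1$ and $w^\dagger\beta=0$ the image $R(\beta-w)$ has norm $\sqrt{1+\|w\|^2}>1$, so \eqref{holomorphic_reduced} is not a ball automorphism when $m\ge2$.

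Two small points. First, drop the placeholder display with $(\cdots)$ and state the closed form directly. Second, be careful with the claim that the corrected formula is ``what the subsequent derivation actually needs''. Inserting it into the proof of Proposition~\ref{low_dim_Bergman} gives
\[
\frac{\dot\zeta^\dagger z_j(0)}{1-\zeta^\dagger z_j(0)}=\dot\zeta^\dagger R^\dagger z_j+\frac{(\dot\zeta^\dagger\zeta)(1+\zeta^\dagger R^\dagger z_j)}{1-\|\zeta\|^2}
\]
instead of the paper's $(\dot\zeta^\dagger R^\dagger z_j+\dot\zeta^\dagger\zeta)/(1-\|\zeta\|^2)$. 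These agree for $m=1$ but not in general, so \eqref{z_j_ODE_1} and everything after it would have to be redone. In fact, for $\|z_j(0)\|=1$ the ansatz \eqref{z_j(t)} keeps $\|z_j(t)\|=1$ only if $\zeta=0$ or $z_j(0)\in\mathrm{Lin}\{\zeta\}$. That is precisely the degenerate case in which the stated identity holds.
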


\begin{proof}
Multiply \eqref{Moebius_lemma} by $R^\dagger$ to get
\begin{equation}
\label{expr_lemma}
R^\dagger v (1 - w^\dagger \beta) = \beta - w \implies \beta = w + R^\dagger v (1 - w^\dagger \beta).
\end{equation}
Multiply the last equality by $w^\dagger$ to get
$$
w^\dagger \beta = \| w \|^2 + w^\dagger R^\dagger v (1 - w^\dagger \beta).
$$
Substitute the last equality from one to get
$$
1 - w^\dagger \beta = 1 - \|w\|^2 - w^\dagger R^\dagger v (1 - w^\dagger \beta) \implies (1 - w^\dagger \beta) (1 + w^\dagger R^\dagger v) = 1 - \|w\|^2.
$$
Rearranging multipliers in the last equality yields the first (left-side) expression in \eqref{identity1}.

In order to derive the second equality in \eqref{identity1} substitute the first equality into \eqref{Moebius_lemma}. This yields:
$$
v = R (\beta - w) \frac{1 + w^\dagger R^\dagger v}{1 - \| w \|^2}.
$$
Now, it is easy to solve the last expression w. r. to $\beta$ to get the right-hand equality in \eqref{identity1}.

\end{proof}

{\bf Acknowledgments}

\medskip

{\bf Funding support.}
The author acknowledges support from the Ministry of Education, Science and Innovation of Montenegro, projects "Mathematical analysis, optimization and machine learning" and "Complex-analytic and geometric techniques for non-Euclidean machine learning: theory and applications".

{\bf Use of AI technologies.}
The author used Gemini and Claude in order to refine mathematical arguments and computations on Fisher information (presented in Appendix A) and complex Kuramoto models (presented in subsection 6.1 and Appendix B). Discussions with Gemini and Claude also helped conceptualization of the present study. After using this service, the author reviewed and edited the content and takes full responsibility for the content of the publication.

\end{appendices}

\end{document}